\newtheorem{lm}{Lemma}
\newtheorem{prop}{Theorem}
\newtheorem{cor}{Corollary}[prop]
\DeclareMathOperator\arctanh{arctanh}
\DeclareMathOperator\arcoth{arcoth}
\newcommand{\msg}[3][]{\mu^{#1}_{#2,#3}(x_#3)}
\newcommand{\coupling}[2]{J_{#1,#2}}
\newcommand{\messageNorm}[3]{\alpha_{#1,#2}^{#3}}
\newcommand{\field}[1]{\theta_{#1}}
\newcommand{\neighbors}[1]{\partial(X_{#1})}
\newcommand{\neighborsVariable}[2]{\partial(#1_{#2})} 
\newcommand{\neighborsWO}[2]{\Gamma_{{#1}\backslash{#2}}}
\newcommand{\totalPotentials}[2]{\Psi_{X_{#1},X_{#2}}\!(x_{#1},x_{#2})}
\newcommand{\map}[1]{\mathcal{BP}\!\{#1 \}}
\newcommand{\fb}{\mathbb{F}_B}
\newcommand{\fbP}{\mathbb{F}_B (P(X_i),P(X_i,X_j))}
\newcommand{\zb}{Z_B}
\newcommand{\zbApproximate}{\tilde{Z}_B}
\newcommand{\zbP}{Z_B \big(P(X_i),P(X_i,X_j)\big)}
\newcommand{\zbApproximateP}{\tilde{Z}_B \big(\tilde{P}(X_i),\tilde{P}(X_i,X_j)\big)}
\newcommand{\eqSys}{\mathbf{F}(\underline{\smash{\mu}}, \underline{\alpha})}
\newcommand{\startSys}{\mathbf{Q}(\underline{\smash{\mu}}, \underline{\alpha})}
\newcommand{\polytope}[1]{S_{#1}}
\newcommand{\polytopeLifted}[1]{\hat{S}_{#1}}
\newcommand{\magnetization}[1][]{\langle m_{#1} \rangle}
\newcommand{\setOfMarginals}[1]{\mathcal{\tilde{P}}_{\text{#1}}}
\newcommand{\attribute}[1]{{\ensuremath{Y_{#1}}}}
\DeclareMathOperator*{\argmax}{argmax}
\definecolor{mygreen}{rgb}{0.3, 0.73, 0.09}
\begin{document}
%
 \title{Fixed Points of Belief Propagation -- An Analysis via Polynomial Homotopy Continuation}
%
%
%
%

\author{Christian~Knoll,~\IEEEmembership{Student Member,~IEEE},
~Dhagash~Mehta,~ Tianran~Chen, and~Franz~Pernkopf,~\IEEEmembership{Senior Member,~IEEE}%
\IEEEcompsocitemizethanks{\IEEEcompsocthanksitem Christian Knoll (christian.knoll.c@ieee.org) and Franz Pernkopf (pernkopf@tugraz.at) are with the Signal Processing and Speech Communication Laboratory, Graz University of Technology.\protect\\
\IEEEcompsocthanksitem  Dhagash Mehta (Dhagash.B.Mehta.11@nd.edu) is with the Department of Applied and Computational Mathematics and Statistics, University of Notre Dame. \protect\\
\IEEEcompsocthanksitem  Tianran Chen (ti@nranchen.org) is with the Department of Mathematics and Computer Science, Auburn University at Montgomery.}
}

\IEEEtitleabstractindextext{%
\begin{abstract}
Belief propagation (BP) is an iterative method to perform approximate inference on arbitrary graphical models. Whether BP converges and if the solution is a unique fixed point depends on both the structure and the parametrization of the model. To understand this dependence it is interesting to find \emph{all} fixed points. In this work, we formulate a set of polynomial equations, the solutions of which correspond to BP fixed points. 

To solve such a nonlinear system we present the numerical polynomial-homotopy-continuation (NPHC) method.
Experiments on binary Ising models and on error-correcting codes show how our method is capable of obtaining all BP fixed points.
On Ising models with fixed parameters we show how the structure influences both the number of fixed points and the convergence properties.
We further asses the accuracy of the marginals and weighted combinations thereof.
Weighting marginals with their respective partition function increases the accuracy in all experiments.
Contrary to the conjecture that uniqueness of BP fixed points implies convergence, we find graphs for which BP fails to converge, even though a unique fixed point exists. Moreover, we show that this fixed point gives a good approximation, and the NPHC method is able to obtain this fixed point. 

\end{abstract}

\begin{IEEEkeywords}
Graphical models, belief propagation, probabilistic inference, sum-product algorithm, Bethe free energy, phase transitions, inference algorithms, dynamical equations.
\end{IEEEkeywords}}

\maketitle

\IEEEdisplaynontitleabstractindextext

%
\IEEEpeerreviewmaketitle

\section{Introduction}\label{sec:introduction}

%
%
%
%
\IEEEPARstart{J}{oint} distributions over many random variables (RVs) are often modeled as probabilistic graphical models. Belief propagation (BP) is a prominent tool to determine marginal distributions of 
such models. 
On tree-structured models the marginals are exact, but BP provides only an approximation on graphs with loops.
Despite the lack of guarantee for convergence, 
BP has been successfully used  for models with many loops, including 
applications in computer vision, medical diagnosis systems, and speech processing \cite{pearl1988,batra2012diverse,pernkopf2014pgm}.
However, instances of graphs do exist where BP fails to converge.  
A deeper understanding of the reasons for convergence of BP, and whether and how its non-convergence 
relates to the number of fixed points may therefore be crucial in understanding BP.

A precise relation among the uniqueness of fixed points, 
convergence rate, and accuracy is yet to be theoretically understood~\cite{mooij2007sufficient}.  Sufficient conditions for uniqueness of fixed points were refined by accounting for 
both the potentials as well as the structure of the model~\cite{heskes2004uniqueness,mooij2007sufficient}.
On graphs with a single loop~\cite{weiss2000correctness} and on small grid graphs~\cite{ihler07} accuracy and convergence rate are related; this does, however, not necessarily hold for all graphs\cite{weller2013approximating}.
As a consequence using provably convergent variants of BP can still lead to accurate approximations~\cite{murphy1999loopy, cccp2003yuille}.
Changing the BP update schedule can also help to achieve convergence \cite{elidan2012residual, knoll2015message, sutton2012improved}. 
One can further increase the accuracy of BP 
by considering multiple fixed points~\cite{batra2012diverse}. Survey propagation~\cite{braunstein2005survey} and its efficient approximation scheme~\cite{srinivasa2016survey} represent distributions over BP messages and marginalize over all obtained fixed points.

In this work we are interested in the relation among convergence properties, the number of fixed points, and the accuracy of these fixed points. 
To get deeper insights into the behavior of BP we aim to find \emph{all} fixed points -- including unstable ones.
If BP converges, however, it does only provide a \emph{single} fixed point.  In order to find all fixed points, we reformulate the update rules of BP as a system of polynomial equations.

There are indeed several methods to solve such a system of polynomial equations. 
Numerical solvers (e.g., Newton's method) are well established, 
but their ability in obtaining the solutions strongly depends on the initial point. Moreover, such methods 
only find a single solution at a time, and do not guarantee to find all solutions even with different initial guesses.
Symbolic methods, on the other hand, are guaranteed to find \emph{all} solutions. 
The Gr\"obner basis method~\cite{cox1992ideals,cox2006using}  
is widely used, but it is 
limited to systems with rational coefficients and it
suffers from fast growing run time and memory complexity. 
Moreover the method has a limited scalability in parallel computation.
In this paper, 
we describe the numerical polynomial homotopy continuation (NPHC) method \cite{li2003solving,sommese2005numerical} 
that overcomes all the above mentioned problems of both iterative and symbolic methods, yet guarantees to find 
\emph{all} solutions of the system.
Over the last few decades, this class of methods has been proven to be 
robust, efficient, and highly parallelizeable. 
We exploit the sparsity of our polynomial system and compute a tight upper bound on the number of complex solutions. This is an essential step to reduce the computational complexity by orders of magnitude so that the systems tackled in this work can be solved in practice.

We apply the NPHC method to different realizations of the Ising model on complete and grid graphs\footnote{A complete graph is an undirected graph where each pair of nodes is connected by an edge. A grid graph, or lattice graph, has all edges aligned along the 2D square lattice. Examples are depicted in Fig.~\ref{fig:graphStructure}.}, 
although it can be used for any other graph structure. 
These models are appealing because they are well studied in the physics literature~\cite{georgii, mezard2009}.
On these graphs we obtain all BP fixed points and show how the number of fixed points changes at critical regions (i.e., phase transitions) in the parameter space and how the fixed points behave under varying potentials. 

Recently we also performed stability analysis of all fixed points on Ising graphs with uniform potentials in~\cite{knoll2017stability}. 
We observed that the existence of non-vanishing local fields 
helps to achieve convergence and increases the accuracy of the best fixed point. We further showed that damping does not improve convergence properties on bipartite graphs.

We further use the obtained fixed points to estimate the approximation of the marginal distribution which we compare to the exact marginal distribution. 
Our main (empirical) observations are: 
(i) on loopy graphs BP does not necessarily converge to the best possible fixed point, 
(ii) on some graphs where BP does not converge we are able to show that there is a unique fixed point but if we enforce convergence to this unique fixed point the obtained marginals still give a good approximation,  and
(iii) convex combinations of multiple fixed points drastically improve the accuracy.
We further apply the NPHC method to analyze error-correcting code and show that BP performs better for densely connected variables.

The paper is structured as follows: Section \ref{sec:background} provides a brief background on probabilistic graphical models and free energy approximations. 
In Section \ref{sec:eqSys}, we reformulate the 
message passing equations, and introduce the NPHC method that guarantees to find all BP fixed point solutions. 
Our experimental results are presented and discussed in Section \ref{sec:experiments}. 
Finally, we conclude the paper in Section \ref{sec:conclusion}.

\section{Background} \label{sec:background}
In this section, we briefly introduce 
probabilistic graphical models and the BP algorithm,
and fix our notations. 
For an in-depth treatment of these topics we refer the reader to \cite{koller-friedman, jordan2004, pernkopf2014pgm}.

\subsection{Probabilistic Graphical Models} \label{subsec:PGM} 
We consider a finite set of $N$ discrete random variables $\mathbf{X} = \{X_1,\dots,X_N\}$ taking values from the binary alphabet $x_i \in \mathbb{S} = \{ -1, +1\}$. 
Let us consider the joint distribution $P(\mathbf{X} = \mathbf{x})$ and the corresponding undirected graph $G = (\mathbf{X},\mathbf{E})$ where $\mathbf{X} = \{X_1,\dots,X_N\}$ is the set of nodes and  $\mathbf{E}$ is the set of edges. 
Between the RVs and the nodes a one-to-one correspondence holds.
An interaction between two nodes  $X_{i}$ and $X_j$, $i\neq j$ is represented by an undirected edge $e_{i,j} \in \mathbf{E}$. We sometimes consider the graph $ G' = (\mathbf{X},\mathbf{E'})$ where $\mathbf{E'} \subset \mathbf{E}$ such that two nodes are connected by one edge at most, i.e.,  $e_{i,j} \in \mathbf{E'} \implies e_{j,i} \notin \mathbf{E'}$. Let the set of neighbors of $X_i$ be defined by $\neighbors{i} = \{X_j \in \mathbf{X} \backslash X_i : e_{i,j} \in \mathbf{E} \}$. 
The joint probability of an undirected graphical model factorizes to 
\begin{align}
P(\mathbf{X} = \mathbf{x}) = \frac{1}{Z} \prod_{l=1}^L \Phi_{C_l}(C_l),
\label{eq:factorization}
\end{align}
 where the potentials $\Phi_{C_l}$ are specified over the maximal cliques $C_l$ of the nodes \cite[p.105]{pearl1988}.
If we restrict all potentials to consist of at most two variables: then, the joint distribution is factorized according to
\begin{equation}
   P(\mathbf{X} = \mathbf{x}) = \frac{1}{Z} \prod_{(i,j)\,:\,e_{i,j} \in \mathbf{E'}} \Phi_{X_i,X_j}(x_i,x_j)\prod_{i=1}^N \Phi_{X_i}(x_i),
   \label{eq:joint}
\end{equation}
where $Z \in \mathbb{R_+^*}$ denotes the strictly positive normalization constant to guarantee a valid probability distribution.
The first product runs over all edges and the second product runs over all nodes; pairwise potentials and local evidence are denoted as $\Phi_{X_i,X_j}$ and $\Phi_{X_i}$ respectively.
We use a shorthand notation for the marginal probabilities $P(x_i) = P(X_i = +1)$ and $P(\bar{x}_i) = P(X_i = -1)$ where no ambiguities occur.

\subsection{Belief Propagation}\label{subsec:beliefPropagation}
Belief propagation (BP) 
is an algorithm that approximates marginal probabilities (or beliefs) $\tilde{P}(X_i = x_i)$. 
The marginals  are approximated by recursively updating messages between random variables. This update rule is guaranteed to converge and return the exact marginals on graphs without loops~\cite{pearl1988}. Note that this procedure has been discovered in different fields independently: belief propagation for probabilistic reasoning \cite{pearl1988}, the sum-product-algorithm in information theory \cite{gallager, kschischang2001factor}, and the Bethe-Peierls approximation in statistical mechanics \cite{mezard2009}.

The messages from $X_i$ to $X_j$ of state $x_j$ at iteration $n+1$ are given by the following update rule:
\begin{equation}
    \mu_{i,j}^{n+1}\!(x_j) \! = \! \messageNorm{i}{j}{n} \! \sum_{x_i \in \mathbb{S}} \!\!\Phi_{X_i,X_j} \!(x_i,x_j) \Phi_{X_i}(x_i) \!\!\!\!\!\prod_{X_k \in \neighborsWO{i}{j} } \!\!\!\! \mu_{k,i}^{n}(x_i), 
\label{eq:message}
    \end{equation}
where $\neighborsWO{i}{j} = \neighbors{i} \backslash \{X_j\}$. Loosely speaking BP collects all messages sent to $X_i$, except for $X_j$, and multiplies this product with the local potential $\Phi_{X_i}(x_i)$ and the pairwise potential $\Phi_{X_i,X_j}(x_i,x_j)$. The sum over both states of $X_i$ is sent to $X_j$. In practice the messages are often normalized by $\messageNorm{i}{j}{n} \in \mathbb{R}_+^*$ so as to sum to one \cite{ihler2005loopy}.
\begin{lm}\label{lm:msg}
 Messages being sent from node $X_i$ to $X_j$, $i \neq j$, represent probabilities -- provided all messages are initialized to be positive.
\end{lm}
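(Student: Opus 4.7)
The plan is to establish the claim by induction on the iteration index $n$, proving simultaneously that (i) every message is strictly positive on each state and (ii) the messages sum to one across $x_j \in \mathbb{S}$. The base case $n=0$ is immediate: strict positivity is given by hypothesis, and after dividing by $\sum_{x_j} \mu_{i,j}^0(x_j)$ (which is strictly positive, hence allows normalization) the messages can be taken to be normalized without loss of generality.

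For the inductive step, I would assume that for every directed pair $(k,i)$ with $X_k \in \neighborsWO{i}{j}$ and every $x_i \in \mathbb{S}$ the message $\mu_{k,i}^{n}(x_i)$ is strictly positive and normalized. Inspecting the update rule~\eqref{eq:message}, the unnormalized right-hand side is
\begin{equation*}
\sum_{x_i \in \mathbb{S}} \Phi_{X_i,X_j}(x_i,x_j)\,\Phi_{X_i}(x_i)\!\!\prod_{X_k \in \neighborsWO{i}{j}}\!\!\mu_{k,i}^{n}(x_i),
\end{equation*}
which is a sum of products of strictly positive quantities, using the standing assumption that the pairwise and local potentials are strictly positive (so that the joint in~\eqref{eq:joint} is well-defined with $Z \in \mathbb{R}_+^*$) together with the inductive positivity of the incoming messages. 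Hence this quantity is strictly positive for each $x_j$, and summing over $x_j \in \mathbb{S}$ yields a strictly positive total, guaranteeing that a normalization constant $\messageNorm{i}{j}{n} \in \mathbb{R}_+^*$ exists which forces $\sum_{x_j \in \mathbb{S}} \mu_{i,j}^{n+1}(x_j) = 1$. Positivity of each normalized component is preserved, closing the induction.

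The one subtlety, and arguably the only non-mechanical part of the argument, is ensuring that the normalization step is legitimate at every iteration; this reduces to showing that the unnormalized sum never vanishes, which is precisely what the combined positivity of potentials and incoming messages delivers. Once this is in place, the conclusion $\mu_{i,j}^{n+1}(x_j) \geq 0$ and $\sum_{x_j} \mu_{i,j}^{n+1}(x_j) = 1$ at every iteration $n$ establishes that each message defines a probability distribution over $x_j \in \mathbb{S}$, completing the proof.
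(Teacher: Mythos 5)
Your argument is correct and is essentially the paper's own proof written out as an explicit induction: the paper likewise observes that strictly positive potentials keep all messages positive at every iteration, so the normalizing constant $\messageNorm{i}{j}{n} \in \mathbb{R}_+^*$ always exists. Your added care about the unnormalized sum never vanishing is a welcome elaboration but not a different approach.
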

\begin{proof}
 Positive potentials in \eqref{eq:message} guarantee that all messages remain positive at every iteration.
 Consequently, a normalization term $\messageNorm{i}{j}{n}$ exists so that $\sum\limits_{x_j \in \mathbb{S}} \mu_{i,j}^{n+1}(x_j) = 1$. 
\end{proof}

The set of all messages at iteration $n$ is given by $\underline{\smash{\mu}}^n = \{ \mu_{i,j}^n(x_j) : e_{i,j}  \in \mathbf{E} \}$. In a similar manner we collect all normalization terms in $\underline{\alpha}^n$. 
Let the mapping of all messages induced by \eqref{eq:message} be denoted as $\underline{\smash{\mu}}^{n+1} = \map{\underline{\smash{\mu}}^{n}} $.
If all successive messages show the same value (up to some predefined precision), that is $\underline{\smash{\mu}}^{n} \cong  \underline{\smash{\mu}}^{n+1}$, then
BP is converged. 
We refer to converged messages and the associated normalization terms as \emph{fixed points} $(\underline{\smash{\mu}}^*,\underline{\alpha}^*)$.

Note that at least one fixed point always exists if all potentials are positive \cite{yedidia2005constructing}. 
Existence of fixed points, however, is not sufficient to guarantee convergence; in fact BP may be trapped in limit cycles or show chaotic behavior~\cite{mackay2001conversation}.
If the messages oscillate, one can try to achieve convergence by damping~\cite{murphy1999loopy}, i.e., by replacing the messages with a weighted average of the last messages so that $\underline{\smash{\mu}}^{n+1} = (1-\epsilon) \map{\underline{\smash{\mu}}^{n}} +\epsilon\underline{\smash{\mu}}^{n}$: 
it follows, that any fixed point of BP with damping is a fixed point of BP without damping as well.

If BP converges to a fixed point, then the marginals are approximated by the normalized product
\begin{equation}
    \tilde{P}(X_i = x_i) = \frac{1}{Z_i} \Phi_{X_i}(x_i)\prod_{X_k\in \neighbors{i}} \mu^{*}_{k,i}(x_i),
    \label{eq:marginals}
\end{equation}
where $Z_i \in \mathbb{R_+^*}$ is required so that $\sum_{\mathbb{S}} P(X_i = x_i) = 1$.
Similar the pairwise marginal of two nodes $(X_i,X_j)$, connected by an edge, 
is defined by the product of all incoming messages times and all factors involved:
\begin{align}
\begin{split}
 \tilde{P}(X_i = x_i, X_j=x_j&) = \frac{1}{Z_{i,j}}  \totalPotentials{i}{j} \cdot \\
 & \prod_{X_k \in \neighborsWO{i}{j}}\msg[*]{k}{i}  \prod_{X_l \in \neighborsWO{j}{i}} \msg[*]{l}{j},
\end{split}
 \label{eq:pairwise}
\end{align}
where $\totalPotentials{i}{j} = \Phi_{X_i}(x_i) \Phi_{X_j}(x_j) \Phi_{X_i,X_j} \!(x_i,x_j)$ and $Z_{i,j}\in \mathbb{R_+^*}$ is the normalization term.

\subsection{Free Energy Approximations} \label{subsec:freeEnergy}
Over the years a fruitful connection between computer 
science and statistical mechanics was established (cf. \cite{mezard2009, welling2003approximate, tatikonda2002, weller2013approximating}).
In particular, the relationship between stationary points of the \emph{Bethe free energy} $\fb$ and 
fixed points of BP 
led to a deeper understanding of BP. 
We briefly discuss important insights 
and present differences in notations to circumvent any  confusions.

In the Ising model, a popular statistical mechanics model, often used for evaluation of BP, 
each node $X_i$ has an associated spin taking values in $\mathbb{S} = \{-1,+1\}$. 
We define the corresponding energy function~\cite[p.44]{mezard2009} by assigning a coupling $J_{i,j} \in \mathbb{R}$ to each edge $e_{i,j} \in \mathbf{E}$ and some local field $\theta_i \in \mathbb{R}$ acting on each node $X_i \in \mathbf{X}$. Note that we drop the subscripts of $J_{i,j}$ and $\theta_i$ whenever they are the same for all edges and nodes.
Let the local and pairwise Ising potentials of state $x_i$ be $\Phi_{X_i}(x_i) = \exp(\theta_i x_i)$ and $ \Phi_{X_i,X_j}(x_i,x_j) = \exp(J_{i,j}x_i x_j)$. 
Then by plugging these potentials into \eqref{eq:joint} the joint distribution is equal to the Boltzmann distribution
\begin{equation}
  P(\mathbf{X} = \mathbf{x}) = \frac{1}{Z} \exp \bigg( \beta \cdot \sum_{(i,j)\,:\,e_{i,j} \in \mathbf{E'}} J_{i,j} x_i x_j+ \sum_{i = 1}^{N}\theta_i x_i \bigg).
  \label{eq:ising}
\end{equation}
We omit the term of the inverse temperature $\beta$ by choosing $\beta = 1$ for the rest of this work.
The Bethe free energy is a function of the marginals and the pairwise marginals
\begin{align}
\begin{split}
\mathbb{F}_B &(P(X_i),P(X_i,X_j)) = \\ 
& \!\!\!\! \sum_{(i,j)\,:\,e_{i,j} \in \mathbf{E'}} \sum_{x_i,x_j} \!\!P(X_i = x_i, X_j=x_j) \cdot \\
&\ln \frac{P(X_i = x_i, X_j=x_j)}{\Phi_{X_i,X_j} \!(x_i,x_j)}  -\sum_{X_i} \sum_{x_i}P(X_i = x_i)\ln \Phi_{X_i}(x_i)\\
&-\sum_{X_i}(|\neighbors{i}|-1) \sum_{x_i} P(X_i = x_i)\ln P(X_i = x_i),
\end{split}
 \label{eq:betheFreeEnergy}
\end{align}
and relates to the \emph{Bethe partition function} according to
\begin{align}
\zbP = \exp\big(-\fbP \big).
\label{eq:bethePartition}
\end{align}
An excellent overview of free energy approximations from a variational perspective and how this relates to BP can be found in~\cite{yedidia2005constructing,wainwright2008graphical}.

%
%

If the Ising model is not on a path graph, critical regions in the parameter space can exist where phase transitions occur~\cite[Ch.12]{georgii}.
If all nodes $X_i \in \mathbf{X}$ have equal degree $|\neighbors{i}| = d+1$, these phase transitions can be determined by replacing the graph with a Cayley tree\footnote{A Cayley tree is an infinite tree without loops that captures interactions of a cyclic finite size graph.} of degree $d$~\cite{taga2004convergence}.
Therefore let 
\begin{align}
 &p(J,d) = \nonumber \\
&\begin{cases}
 d \arctanh{\sqrt{\frac{d\cdot w -1}{d/w -1}}} - \arctanh{\sqrt{\frac{d - 1/w}{d-w}}} \quad \text{if} \; J  > \arcoth(d)\\
 d \arctanh{\sqrt{\frac{d\cdot w -1}{d/w -1}}} + \arctanh{\sqrt{\frac{d - 1/w}{d-w}}} \quad \text{if} \; J  < \arcoth(d)
\end{cases}
\label{eq:phaseTransitions}
\end{align}
where $w = \tanh{|J|}$. Also, in accordance with \cite{taga2004convergence} and \cite[p.247-255]{georgii}, let the phase transitions partition the parameter space $(J,\theta)$ into the following three distinct regions $(I),(II)$, and $(III)$ as : 
\begin{align}
(J,\theta) &\in (I) \,\ \ \text{if} \ J  > 0, \ J > p(J,d)   \quad \text{and} \,|\theta| \leq p(J,d), \label{eq:phase1} \\
(J,\theta) &\in (II)\, \text{if}  \ J < 0, \ J < -p(J,d)  \ \text{and} \,|\theta| < p(J,d) , \label{eq:phase2}\\
(J,\theta) &\in (III)\,\text{if} \ (J,\theta) \notin (I) \quad\text{and} \quad  (J,\theta) \notin (II). \label{eq:phase3}
\end{align}

In the literature one distinguishes three different interactions on Ising grids. First, if all couplings $J > 0$ the model is \emph{ferromagnetic}: for ferromagnetic models BP converges to a unique stationary point inside $(III)$. This stationary point becomes unstable and two additional stationary points emerge inside $(I)$~\cite{yedidia2005constructing, mezard2009}.
Second, if all couplings $J < 0$ the model is \emph{antiferromagnetic}: here, BP only converges inside $(III)$ and not inside $(II)$~\cite{mooij2005properties}.
Finally, \emph{spin glasses} are models containing both positive and negative couplings.

Depending on the graph structure spin glasses and antiferromagnetic models allow for frustrations; i.e., the marginals minimizing the Bethe free energy may correspond to a degenerate joint distribution~\cite{mackay2001conversation},~\cite[pp.45]{mezard2009}.

\subsection{Stationary Points of Bethe Free Energy}
As stationary points of $\fb$ and of $\zb$ (cf.~\eqref{eq:bethePartition}) correspond to BP fixed points, one can try to obtain stationary points of $\fb$ directly, instead of performing BP.
For tree-structured graphs and one-loop graphs $\fb$ is a convex function~\cite{heskes2004uniqueness}.
For general graphs, however, convexity breaks down and $\fb$ has multiple local minima. Note that stable fixed points (cf. Sec.~\ref{sec:evolution}) of BP do always correspond to local minima of $\fb$; the converse however need not be the case, i.e., unstable fixed points can be either local maxima or local minima of $\fb$~\cite{heskes2003stable}.
Sufficient conditions for convexity of $\fb$ are often used to make statements regarding uniqueness of BP fixed point solutions.
This seems to be a rather limiting point of view -- it is possible to add loops to a former tree-structured model without changing the distribution \cite[pp.2391]{heskes2004uniqueness}.
Hence, the number of BP fixed points does not only depend on the structure of the graph but also depends on the potentials.

Energy functions with many local minima can be decomposed  into a convex and a concave problem (this decomposition is in general not unique)~\cite{cccp2003yuille}. Alternatively one can construct convex surrogates and minimize these convex functions instead~\cite{meshi2009convexifying}.
There is a variety of methods that obtain the marginal probabilities by minimizing $\fb$.
Belief optimization~\cite{welling2001belief} follows the negative gradient of $\fb$ and guarantees the marginalization constraints to be satisfied; i.e., the minimization takes place along the edges of the Bethe polytope.
An efficient way to obtain an approximate fixed point is presented in~\cite{shin2012complexity}. A projection scheme in the minimization task allows for a fully polynomial-time approximation on sparse graphs with $\max(|\neighbors{}|) = \mathcal{O}(\log N)$.
This algorithm is further improved in~\cite{weller2013approximating} to return an approximation of the best (stable) fixed point of BP.
An alternative method to approximate and combine all stable solutions of BP (i.e., local minima of $\fb$) is presented in~\cite{srinivasa2016survey}.

However all of these methods rely on the idea of finding minima of $\fb$; and consequently fail to obtain (unstable) fixed points that correspond to local maxima of $\fb$.


\section{Solving BP Fixed Point Equations}\label{sec:eqSys}
We reformulate the message update rules~\eqref{eq:message} as a system of polynomial equations whose solutions are the fixed points of BP.
Whether such systems can be solved in practice depends largely on the chosen method. 
We list a variety of approaches and describe NPHC that can be applied in practice to obtain \emph{all} BP fixed points.

\subsection{Reformulation of Belief Propagation}

For all messages $\mu_{i,j}(x_j) : e_{i,j} \in \mathbf{E}$ the residual, i.e., the difference between two successive message values $\underline{\smash{\mu}}^{n+1} - \underline{\smash{\mu}}^{n}$ and the  message normalization constraints $ \underline{\alpha}^n$ are given by the following system of polynomial equations:

\begin{align}
 &\eqSys = \nonumber \\ 
&\begin{cases}
 \! -\mu_{i,j}^{n}(x_j)      \!+\! \alpha_{i,j}^n \!\! \sum\limits_{x_i \in \mathbb{S}} \!\! \Phi_{X_i,X_j}(x_i,x_j)      \Phi_{X_i}(x_i) \!\!\!\prod\limits_{X_k \in \neighborsWO{i}{j}}\!\!\!\! \mu_{k,i}^{n}(x_i)\\ 
 \! -\mu_{i,j}^{n}(\bar{x}_j)\!+\! \alpha_{i,j}^n \!\! \sum\limits_{x_i \in \mathbb{S}} \!\!\Phi_{X_i,X_j}(x_i,\bar{x}_j) \Phi_{X_i}(x_i) \!\!\!\prod\limits_{X_k \in \neighborsWO{i}{j}}\!\!\!\! \mu_{k,i}^{n}(x_i)\\
 \mu_{i,j}^{n}(x_j) + \mu_{i,j}^{n}(\bar{x}_j) -1.
\end{cases}
\label{eq:SetOfEq}
\end{align}
This system of polynomial equations consist of 
\begin{align}
M = 2|\mathbf{E}| \cdot (|\mathbb{S}| + 1)
\label{eq:NrEq}
\end{align}
equations $(f_1(\underline{\smash{\mu}}, \underline{\alpha}), \ldots , f_M(\underline{\smash{\mu}}, \underline{\alpha}))$.
To solve such a polynomial system, it is advantageous to consider it defined over complex variables rather than 
real variables, in order to apply the NPHC method. Let the set of solutions over complex variables, without accounting for multiplicity, be 
\begin{align}
 V(\mathbf{F}) = \{ (\underline{\smash{\mu}}, \underline{\smash{\alpha}}) \in \mathbb{C} : f_m(\underline{\smash{\mu}},\underline{\alpha}) = 0 \text{ for all }f_m \in \eqSys\}.
\end{align}
We further define the set of solutions $V_{\mathbb{R}_+}^{*}(\mathbf{F}) \subset V(\mathbf{F})$ over strictly positive real 
numbers.

\begin{prop}[Fixed Points of BP]\label{prop:EqSys}
Let $(\underline{\smash{\mu}}, \underline{\alpha})$ be some set of messages and normalization terms. 
Then, $(\underline{\smash{\mu}}, \underline{\alpha})$ is a fixed point solution of BP, if and only if $(\underline{\smash{\mu}}, \underline{\alpha}) \in V_{\mathbb{R}_+}^{*}(\mathbf{F})$.
\end{prop}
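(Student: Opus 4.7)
The statement is a biconditional, so the plan is to handle the two directions separately, both of which should follow directly from unpacking the definitions of the BP update rule~\eqref{eq:message}, the fixed point condition $\underline{\smash{\mu}}^{n+1} = \underline{\smash{\mu}}^n$, and the polynomial system~\eqref{eq:SetOfEq}. The strategy is essentially a bookkeeping argument; nevertheless, some care is needed to see that the three rows in~\eqref{eq:SetOfEq} correspond exactly to the two update equations (one per state $x_j \in \mathbb{S}$) plus the normalization condition, and to verify positivity.

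For the ``only if'' direction, I would start from an arbitrary fixed point $(\underline{\smash{\mu}}^*, \underline{\alpha}^*)$, i.e., a converged set of messages satisfying $\underline{\smash{\mu}}^{n+1} \cong \underline{\smash{\mu}}^n$. Substituting this equality into~\eqref{eq:message} for each $x_j = +1$ and $\bar{x}_j = -1$ yields precisely the first two lines of~\eqref{eq:SetOfEq} (after moving the left-hand side to the right and setting the expression equal to zero). The third line follows from Lemma~\ref{lm:msg}: since messages are normalized probabilities, $\mu_{i,j}^*(x_j) + \mu_{i,j}^*(\bar{x}_j) = 1$. Positivity of $\underline{\smash{\mu}}^*$ again comes from Lemma~\ref{lm:msg}, and positivity of $\underline{\alpha}^*$ follows because $\alpha_{i,j}^* \in \mathbb{R}_+^*$ is defined precisely as the normalization constant needed to make a strictly positive sum sum to one. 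Hence $(\underline{\smash{\mu}}^*, \underline{\alpha}^*) \in V_{\mathbb{R}_+}^*(\mathbf{F})$.

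For the ``if'' direction, suppose $(\underline{\smash{\mu}}, \underline{\alpha}) \in V_{\mathbb{R}_+}^*(\mathbf{F})$. Then all entries are strictly positive real numbers by assumption, and the first two rows of~\eqref{eq:SetOfEq} being equal to zero state exactly that
\begin{equation*}
\mu_{i,j}(x_j) = \alpha_{i,j}\sum_{x_i \in \mathbb{S}}\Phi_{X_i,X_j}(x_i,x_j)\Phi_{X_i}(x_i)\!\!\!\prod_{X_k \in \neighborsWO{i}{j}}\!\!\!\mu_{k,i}(x_i)
\end{equation*}
for each $x_j \in \mathbb{S}$. This is exactly the BP update rule~\eqref{eq:message} with $\mu_{i,j}^{n+1} = \mu_{i,j}^n = \mu_{i,j}$ for every edge, so applying $\map{\cdot}$ leaves the messages invariant, which is the definition of a BP fixed point. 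The third row of~\eqref{eq:SetOfEq} guarantees the normalization $\sum_{x_j}\mu_{i,j}(x_j) = 1$ and is therefore consistent with Lemma~\ref{lm:msg}.

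The only conceptual subtlety—and the place I would spend the most attention in writing this carefully—is verifying that the normalization constant $\alpha_{i,j}$ appearing in~\eqref{eq:SetOfEq} as an independent polynomial variable really corresponds to the same $\alpha_{i,j}$ computed from~\eqref{eq:message}. This is automatic because, after imposing the first two equations and the third (summation to one), adding the two message equations forces $\alpha_{i,j}$ to equal the reciprocal of the sum of the unnormalized messages, matching the definition used in Section~\ref{subsec:beliefPropagation}. No other obstacle should arise; the rest is purely algebraic manipulation.
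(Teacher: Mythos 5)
Your proposal is correct and follows essentially the same route as the paper's proof: both directions are handled by directly unpacking the BP update rule, the fixed-point condition, and the polynomial system~\eqref{eq:SetOfEq}, with Lemma~\ref{lm:msg} supplying positivity and normalization. Your version is somewhat more detailed than the paper's (notably in checking that the variable $\alpha_{i,j}$ in the polynomial system is forced to coincide with the normalization constant of~\eqref{eq:message}), but the underlying argument is identical.
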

\begin{proof}
First we show that $(\underline{\smash{\mu}}, \underline{\alpha}) \in V_{\mathbb{R}_+}^{*}(\mathbf{F})$ is sufficient to characterize fixed point solutions.
All messages are positive and represent probabilities (Lemma~\ref{lm:msg}). Further, it follows from \eqref{eq:SetOfEq} that $BP  \{ \underline{\smash{\mu}}^n \} - \underline{\smash{\mu}}^n = 0$, which constitutes a fixed point solution (cf. Section~\ref{sec:background}).

Conversely, consider some fixed point messages and the corresponding normalization coefficients $(\underline{\smash{\mu}}^* , \underline{\alpha}^*)$, it then follows by definition that $BP \! \{ \underline{\smash{\mu}}^* \} = \underline{\smash{\mu}}^*$ and consequently $\eqSys = 0$.
\end{proof}

\begin{cor}\label{prop:nonEmpty}
Consider a graph with strictly positive potentials. Then, the solution set $V_{\mathbb{R}_+}^{*}(\mathbf{F})$ is nonempty. Moreover, if $\Phi_{X_i,X_j}$ and $\Phi_{X_i}$ are Ising potentials, then $V_{\mathbb{R}_+}^{*}(\mathbf{F})$ is always nonempty.
\end{cor}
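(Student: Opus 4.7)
The plan is to reduce this corollary to the known existence result for BP fixed points under positive potentials, and then transfer that fact to the polynomial system via the preceding proposition.

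First, I would invoke the classical existence result cited in the paper just above Lemma~\ref{lm:msg} (see~\cite{yedidia2005constructing}): for any graph with strictly positive potentials, at least one BP fixed point $(\underline{\smash{\mu}}^*, \underline{\alpha}^*)$ exists. A standard way to establish this is to reparameterize each message $\mu_{i,j}$ by its log-ratio $m_{i,j} = \log(\mu_{i,j}(+1)/\mu_{i,j}(-1))$ so that strictly positive probability messages correspond to finite real values. Since all potentials $\Phi_{X_i,X_j}, \Phi_{X_i}$ are strictly positive and bounded, the BP update expressed in these log-ratio coordinates becomes a continuous self-map of a sufficiently large closed Euclidean ball, and Brouwer's fixed point theorem furnishes a fixed point automatically lying in the strict interior of the probability simplices.

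Second, I would apply Lemma~\ref{lm:msg} and Proposition~\ref{prop:EqSys} to move this fixed point into $V_{\mathbb{R}_+}^{*}(\mathbf{F})$. Lemma~\ref{lm:msg} guarantees that the messages remain strictly positive probabilities; the corresponding normalizers $\alpha_{i,j}^*$ are then uniquely determined positive real numbers by the constraint $\mu_{i,j}^*(x_j) + \mu_{i,j}^*(\bar{x}_j) = 1$. Proposition~\ref{prop:EqSys} then asserts that $(\underline{\smash{\mu}}^*, \underline{\alpha}^*) \in V_{\mathbb{R}_+}^{*}(\mathbf{F})$, which proves the first assertion.

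Finally, for the Ising claim I would simply observe that for any $\theta_i, J_{i,j} \in \mathbb{R}$ the potentials $\Phi_{X_i}(x_i) = \exp(\theta_i x_i) > 0$ and $\Phi_{X_i,X_j}(x_i,x_j) = \exp(J_{i,j} x_i x_j) > 0$ are automatically strictly positive, since the exponential maps $\mathbb{R}$ into $\mathbb{R}_+^*$. Consequently the hypothesis of the first part of the corollary is unconditionally satisfied for Ising potentials, giving nonemptiness of $V_{\mathbb{R}_+}^{*}(\mathbf{F})$ for every Ising model.

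The main obstacle, if one wishes to avoid simply citing~\cite{yedidia2005constructing}, is ruling out that a Brouwer fixed point of the normalized BP operator ends up on the boundary of the simplex (where some message entry vanishes), since the boundary cannot be interpreted as a valid probability pair compatible with the multiplicative BP update. The log-ratio reformulation above is the cleanest way I would address this, as it eliminates the boundary from the domain entirely and leverages the uniform bounds guaranteed by strict positivity of the potentials.
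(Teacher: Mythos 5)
Your proof is correct, and it reaches the conclusion by a genuinely different mechanism than the paper. The paper's own proof is variational: it cites \cite{yedidia2005constructing} for the fact that, with non-negative potentials, the average energy is bounded below so the constrained $\fb$ attains a minimum, then uses the correspondence between such minima and BP fixed points, and finally invokes Theorem~\ref{prop:EqSys} to land in $V_{\mathbb{R}_+}^{*}(\mathbf{F})$. You instead establish existence of a fixed point directly on the message dynamics: the log-ratio reparameterization plus the observation that strictly positive potentials force the updated log-ratios into a compact set lets you apply Brouwer's theorem, after which the transfer into $V_{\mathbb{R}_+}^{*}(\mathbf{F})$ via Lemma~\ref{lm:msg} and Theorem~\ref{prop:EqSys} is the same as the paper's. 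Your route is more self-contained and, usefully, it confronts head-on the boundary issue (a putative fixed point with a vanishing message entry) that the paper's appeal to "minima of the constrained $\fb$ correspond to BP fixed points" quietly absorbs into the cited theorem; the paper's route, in exchange, is shorter and makes the link to the free-energy picture that the rest of Section~\ref{subsec:freeEnergy} relies on. Your explicit treatment of the Ising case (exponentials are strictly positive, so the hypothesis is automatic) is also more complete than the paper's proof, which does not address that clause at all. Both arguments are sound.
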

\begin{proof}
 For non-negative potentials the average energy is bounded from below and $\fb$ has at least one minimum \cite[Th. 4]{yedidia2005constructing}.
 Minima of the constrained $\fb$ correspond to BP fixed point solutions, the existence of which implies non-emptiness of 
 $V_{\mathbb{R}_+}^{*}(\mathbf{F})$ by Theorem \ref{prop:EqSys}.
\end{proof}

\subsection{Solving Systems of Polynomial Equations}

Solving systems of (nonlinear) polynomial equations is a classic problem 
in computational mathematics, and a great variety of approaches have
been developed such as iterative solvers, symbolic methods, and homotopy methods.

One basic method for solving systems of nonlinear equations is Newton's method 
which is an iterative solver that progressively refines an initial guess to 
reach a solution.
Such iterative solvers find a \emph{single} solution
in the vicinity of  an already known initial guess.
However, when the initial guess is not sufficiently close to a solution, 
they may diverge or even exhibit chaotic behavior.
Moreover, it is difficult, to obtain the full set of solutions with
these methods.
We are interested in the entire set 
of (positive) real solution $V_{\mathbb{R}_+}^{*}(\mathbf{F})$, i.e., in obtaining \emph{all} fixed points;
therefore iterative solvers are not useful in the current setting.

From a completely different point of view, symbolic methods \cite{cox1992ideals,cox2006using}
(e.g. Gr\"obner basis method, Wu's method, and methods of sparse resultant)
rely on symbolic manipulation of the equations and successive elimination of
variables to obtain a simpler but equivalent form of the equations.
In a sense, these methods can be considered as generalizations
of the Gaussian elimination method for linear systems into nonlinear settings.
In the past several decades, symbolic methods, especially the Gr\"obner basis method,
have seen substantial development.
But the method is limited to systems with rational coefficients and has a worst case complexity that is double exponential in the variables \cite{mayr1982complexity, moral1984upper}. 
This and the limited scalability in parallel computation limits the application of symbolic methods to smaller systems. In fact, the Gr\"obner did not converge for any of our experiments.

\subsection{Numerical Polynomial Homotopy Continuation}
Another important approach for solving a system of nonlinear equations is the
\emph{numerical polynomial homotopy continuation} (NPHC) method~\cite{li2003solving,sommese2005numerical}. 
The ``target'' system $\eqSys$ in~\eqref{eq:SetOfEq}, which we intend to solve, is continuously deformed into a 
closely related ``starting system'' $\startSys = ( q_1(\underline{\smash{\mu}}, \underline{\alpha}), \ldots , q_M(\underline{\smash{\mu}}, \underline{\alpha}) )$ 
that is trivial to solve. 
With an appropriate construction, the corresponding solutions also vary 
continuously under this deformation forming ``solution paths'' that connect the 
solutions of the trivial system to the desired solutions of the target system.

For instance, a basic form of linear homotopy for the target system
can be given by
\begin{align}
  H(\underline{\smash{\mu}}, \underline{\alpha}, t) = (1-t)\startSys + \gamma t\eqSys = 0.
\label{eq:linearHomotopy} 
\end{align}
Clearly, at $t=0$ the homotopy reduces to the starting system $\startSys$
and at $t=1$ it reduces to $\eqSys$. 
As $t$ varies continuously from $t=0$ to $t=1$,
the homotopy represents a deformation from the starting system to the target
system. 
For a generic complex $\gamma$, the above procedure is guaranteed to 
find \textit{all} isolated complex solutions of $\eqSys$ \cite[pp.91]{sommese2005numerical}.

Even though only positive real solutions $V_{\mathbb{R}_+}^{*}(\mathbf{F})$ are of interest in the current study, the homotopy method benefits from extending the domain to the field of complex numbers. Only then smooth solution paths emerge towards the desired solutions.
Each of these solution paths can be tracked independently making this approach \emph{pleasantly parallelizeable}; this is essential in dealing with large polynomial systems.

\subsection{Polyhedral Homotopy}
More advanced "nonlinear" homotopies can be constructed where the parameter
$t$ appears in nonlinear form. 
Among a great variety of polynomial homotopy constructions, the 
\emph{polyhedral homotopy method}, developed by B. Huber and B. Sturmfels~\cite{huber1995polyhedral}, 
is particularly suited
as it is capable of finding \emph{all} isolated nonzero complex solutions\footnote{Here, ``nonzero complex solutions'' refer to complex solutions of a system of polynomial equations where each variable is nonzero. A solution is considered to be isolated if it has no degree of freedom, i.e., there is an open set containing it but no other solutions.},
which must include \emph{all} BP fixed points $V_{\mathbb{R}_+}^{*}(\mathbf{F})$.
This advantage and the level of parallel scalability are the main motivation for applying the polyhedral homotopy method

In applying the NPHC method to solve \eqref{eq:SetOfEq},
the choice of $\startSys$ (the trivial system of equations that 
the target system is deformed into) plays an important role in the overall
efficiency of the approach since different choices of $\startSys$ may induce a vastly different
number of solution paths one has to track. The crucial part is to come up with a good upper bound on the number of solutions and to create an appropriate start system. Once this is solved, every solution can be tracked completely independently in parallel in order to reach all desired solutions.

In each equation of~\eqref{eq:SetOfEq} only few of the monomials are present, i.e., the update equations of BP imply a sparse system of equations~\cite{huber1995polyhedral}. This holds even if the graph is non-sparse.
In our experiments, we observed that despite the rather high
\emph{total degree}\footnote{%
    The total degree of a system of polynomial equations is the product of 
    the degrees of each equation.
    It is a basic fact in algebraic geometry that the total number of isolated complex solutions
    a polynomial system has is bounded by its total degree (i.e., Bezout bound).
    Therefore the total degree serves as a crude measure of the complexity
    of the polynomial system.} $d_t$~\cite[pp.118]{sommese2005numerical},
each equation in~\eqref{eq:SetOfEq} contains only few of the monomials   
the number of solution paths to be tracked 
to solve \eqref{eq:SetOfEq} is relatively small.
The number of solution paths one has to track when using the 
polyhedral homotopy method for solving a system of polynomial equations is given by the so-called \emph{Bernstein-Kushnirenko-Khovanskii (BKK) bound}: 
fixing the list of monomials that appear in the polynomial system, it is an important yet surprising fact in algebraic geometry 
that for almost all choices of the coefficients (in the probabilistic sense), the number of isolated
nonzero complex solutions is a fixed number which only depends on the list of monomials. 
This number is known as the BKK bound \cite{Bernstein75,Kushnirenko76,Khovanski78}. Intermediate steps in the determination of the BKK bound  are reused to create an appropriate start system. Using the fully parallel implementation \textsf{Hom4PS-3}~\cite{chen2014hom4ps} of the polyhedral homotopy method, we compute the BKK bound, that is tight in all our experiments, and obtain \emph{all} isolated positive solutions.

The polyhedral homotopy method exploits the structure of~\eqref{eq:SetOfEq}, but also requires some subtle steps. Rather than presenting all technical details we present an illustrative example to explain the underlying principles. For more details we refer the reader to the excellent overview papers \cite{li1997numerical,li2003solving, chen_homotopy_2015} or to \cite[Sec.8.5.4]{sommese2005numerical} and the references therein.
\subsubsection{Illustrative Example}

The essential steps in solving polynomial system with the polyhedral homotopy method are: first, compute a root count based on mixed volume computations~\cite[Section 3]{li2003solving}; secondly, come up with an easy to solve start system~\cite[Section 4]{li2003solving}; and finally, solve the start system and track the solution paths to the target system~\cite[Section 1]{li2003solving}.\\ \\
\emph{(i) Root Count:} 
Consider the example system in two unknowns $\underline{x} = \{x_1,x_2\}$ and with 4 solutions taken from~\cite[p.142]{sommese2005numerical}:
\begin{align}
 \mathbf{F}(\underline{x}) = 
&\begin{cases}
1+ax_1+bx_1^{2}x_2^{2}\\
1+cx_1+dx_2+ex_1x_2^{2}.
\end{cases} \label{eq:ExampleSystem}
\end{align}
The total degree of this system of equations is $d_t = 4\cdot 3  = 12$, which serves as an upper bound on the actual number of solutions. 
If the system of equations is sparse, the BKK bound serves
as a much tighter bound.

Every equation $f_m \in \mathbf{F}(\underline{x})$ has an associated polytope $\polytope{m}$ which is the convex hull of the exponent vectors for all monomials of $f_m$. For $f_1 $ the polytope is
\begin{align}
 \polytope{1} = \{(0,0)(1,0)(2,2)\},
\end{align}
which has a graphical representation in Fig.~\ref{fig:q1}. Similar for $f_2$ the polytope $\polytope{2}=\{(0,0)(1,0)(0,1)(1,2)\}$ is shown in Fig.~\ref{fig:q2}.


Some important operation on polytopes are: the Minkowski sum $\polytope{1}+\polytope{2} = \{s_1+s_2 : s_1 \in \polytope{1}, s_2 \in \polytope{2}\}$, and the computation of volumes, denoted by $V(\polytope{m})$. 
Note that this approach generalizes to higher dimensions, therefore we refer to $V(\polytope{m})$ as volume even if the polytopes lie in the two-dimensional space.
The computation of the mixed volume $M(\polytope{1},\polytope{2})$ is a combinatorial problem that is especially comprehensible in the case of two equations where
\begin{align}
 M(\polytope{1},\polytope{2}) = V(\polytope{1}+\polytope{2}) - V(\polytope{1})-V(\polytope{2}).
\end{align}
For a generalization to higher dimensions see~\cite[p.140]{sommese2005numerical}.
The polytope of $\polytope{1}+\polytope{2}$ is illustrated in Fig.~\ref{fig:q1+q2}: the mixed volume is then obtained by subtracting $V(\polytope{1})$ and $V(\polytope{2})$ from $V(\polytope{1}+\polytope{2})$,
which equals the sum of all gray areas (known as mixed cells) in Fig.~\ref{fig:q1+q2}. 

It is straightforward to see that each parallelogram has volume equal to 2;
consequently the BKK bound equals $M(\polytope{1},\polytope{2}) = 4$ and provides a tight bound on the number of solutions.

\begin{figure}
 \newcommand{\unitBKK}{1cm}
 \centering
 \subfloat[][]{ \label{fig:q1}
 \includegraphics[width =0.22\linewidth]{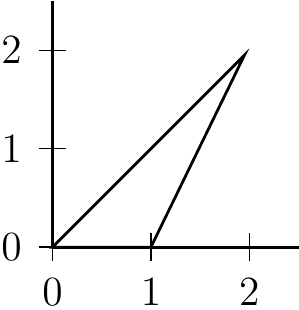}}
 \subfloat[][]{ \label{fig:q2}
 \centering
 \includegraphics[width =0.22\linewidth]{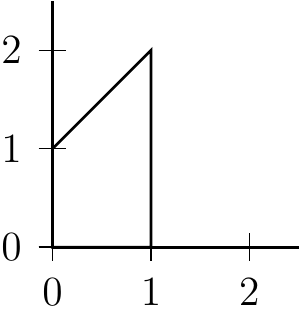}}
 \subfloat[][]{ \label{fig:q1+q2}
 \centering
 \includegraphics[width =0.3\linewidth]{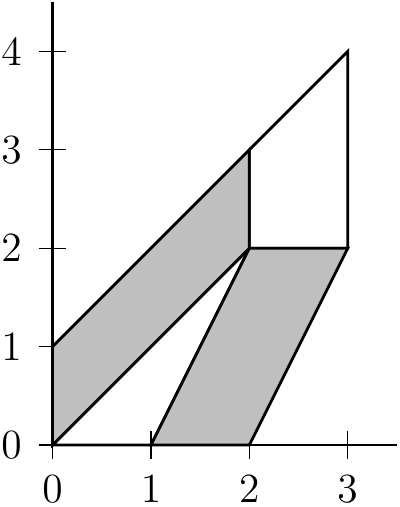}}
\caption[]{\subref{fig:q1} Polytope $\polytope{1}$, \subref{fig:q2} polytope $\polytope{2}$, and \subref{fig:q1+q2} Minkowski sum $\polytope{1}+\polytope{2}$}
\end{figure}

\emph{(ii) Start System:} The BKK-bound does not immediately lead to the solutions of an appropriate start system $\mathbf{Q}(\underline{x})$ with $q_m(x) = \sum_{a\in \polytope{m}} c_{m,a}\underline{x}^a$, where  $\underline{x}^a = x_1^{a_1}\cdot x_2^{a_2}$ and $c_{m,a}$ are random coefficients.

However, the mixed volume computation can also be accomplished by introducing a lifting $\omega_m = \{\omega_m(a):a \in \polytope{m}\}$ for each $f_m$.  
Thereby we increase the dimension of the polytope $\polytope{m}$ to $\polytopeLifted{m}$ by adding one component to each exponent-vector $a$. This component is obtained by the lifting function $\omega_m(a)$. In our example we choose the lifting values $\omega_1 = \{0,0,0\}$ and $\omega_2 = \{0,1,1,3\}$; these values are obtained by the inner products $\omega_1(a) = (0,0)\circ(a_1,a_2)$ and $\omega_2(a) = (1,1)\circ(a_1,a_2)$.
The polytopes are lifted accordingly so that $\polytopeLifted{1} = \{(0,0,0)(0,1,0)(2,2,0)\}$, $\polytopeLifted{2}=\{(0,0,0)(1,0,1)(0,1,1)(1,2,3)\}$ and 
$\polytopeLifted{1}+\polytopeLifted{2}=\{(0,0,0)(0,1,0)(2,2,0)(0,2,1)(0,1,1)(3,2,1)(3,4,3)\}$. Then the faces in the lower hull of $\polytopeLifted{1}+\polytopeLifted{2}$ correspond to cells shown in Fig.~\ref{fig:q1+q2}, which is known as a fine mixed subdivision.


These liftings and the random  coefficients $c_{m,a}$ form the homotopy $\hat{\mathbf{Q}}(\underline{x},t)$ with $\hat{q}_m =\sum_{a\in \polytope{m}} c_{m,a}\underline{x}^a t^{\omega_m(a)}$ such that 
\begin{align}
 \hat{\mathbf{Q}}(\underline{x},t) = 
&\begin{cases}
1+c_{1,1}x_1+c_{1,2}2x_1^{2}x_2^{2}\\
1+c_{2,1}x_1t+c_{2,2}x_2t+c_{2,3}x_1x_2^{2}t^3.
\end{cases}
\end{align}

By closer inspection, however, it is not possible to identify the starting points because $\hat{q}_2(\underline{x},t=0) = 1$. This problem can be resolved according to~\cite[Lemma 3.1]{huber1995polyhedral}: i.e., initial values are obtained by solving a binomial system for every cell that contributes to the mixed volume computation (gray cells in Fig.~\ref{fig:q1+q2}).
One can increase $t$ and track the solution paths to $\hat{\mathbf{Q}}(\underline{x},t=1) = \mathbf{Q}(\underline{x})$.

\emph{(iii) Target System:} Finally we have all 4 solutions to $\mathbf{Q}(\underline{x})$. Now what remains is to construct a \emph{linear} homotopy according to~\eqref{eq:linearHomotopy} and increase $t$ starting $t=0$. At $t=1$ the homotopy reduces to $\mathbf{F}(\underline{x})$ and provides the desired solutions.

\section{Experiments}\label{sec:experiments}

In various experiments we apply the NPHC method to system \eqref{eq:SetOfEq} and obtain all 
fixed points by first finding all the isolated non-zero complex solutions. 
We consider ferromagnetic, antiferromagnetic, and spin glass models on fully connected graphs and grid graphs shown in Fig.~\ref{fig:graphStructure}. We evaluate and compare the accuracy of all fixed points obtained by NPHC; details for our evaluation criteria are presented in Sec.~\ref{subsec:evaluationCriteria}.
Further we present the evolution of the fixed points over the parameter space in Sec.~\ref{sec:evolution} and present the implications on the accuracy to better understand under which settings BP can be expected to provide good results. 
The capability of NPHC to obtain all fixed points allows for a thorough stability analysis. This inspired some theoretical investigations on Ising graphs with uniform parameters~\cite{knoll2017stability}. There we show why convergence properties degrade with growing graph size and why non-vanishing fields help to achieve convergence. The limited influence of damping on bipartite graphs is also explained.

\begin{figure}[!t]
  \centering
  \subfloat[][]{ \label{fig:3x3} 
    \includegraphics[width =0.45\linewidth]{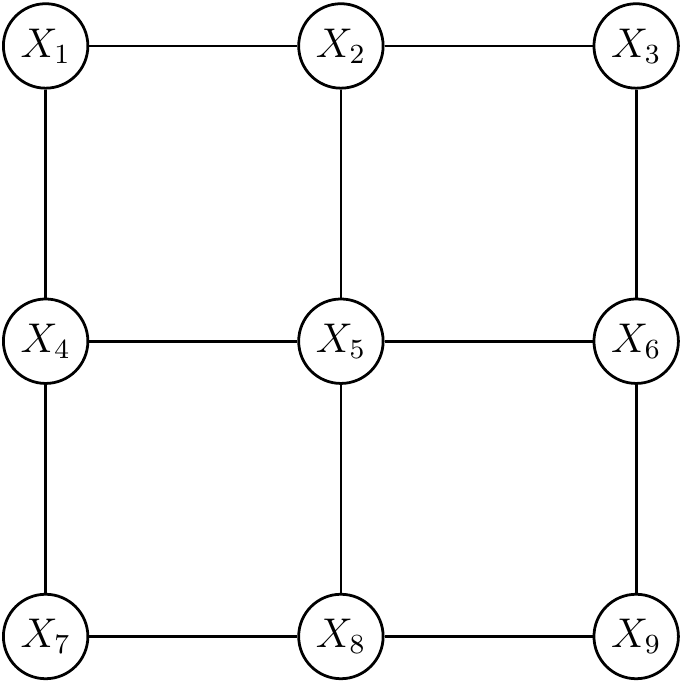}}
  \subfloat[][]{ \label{fig:2x2}
    \includegraphics[width =0.45\linewidth]{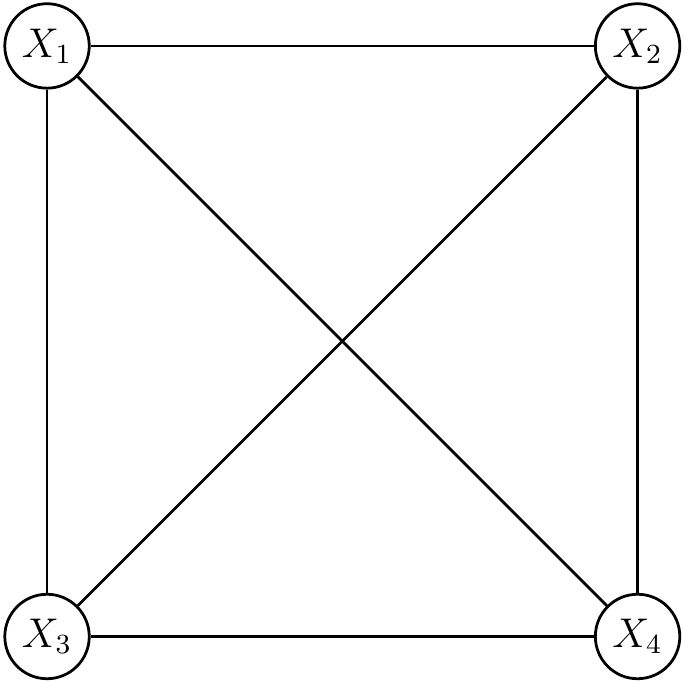}}
    \caption{\label{fig:graphStructure}Structure of Ising graphs considered: \protect\subref{fig:3x3} grid graph with 9 RVs;  \protect\subref{fig:2x2} fully connected graph.}
\end{figure}

The systems considered in this work are simply too large to be solved with symbolic methods or even with the NPHC method based on a linear homotopy. 
The BKK bound, however, takes into account the sparsity of the systems $\eqSys$, induced by the structure of the graph, and reduces the number of solution paths to be tracked so that the problem can be solved in practice. We present a detailed runtime analysis in Sec.~\ref{subsec:runtime}.
Note that neither the structure of \eqref{eq:SetOfEq} nor the number of complex solutions in $V(F)$ does change if the underlying structure of the graph is the same~\cite{chen2015network,chen2016network}; but, depending on the potentials, 
the number of solutions in $V_{\mathbb{R}_+}^{*}(\mathbf{F})$ may change.


Finally, the performance of BP-decoding for low error-correcting codes is analyzed by obtaining all fixed points with NPHC for the associated factor graph and by comparing them with the exact solutions int Sec.~\ref{subsec:ldpc}.
\subsection{Evaluation Criteria} \label{subsec:evaluationCriteria}
We apply \eqref{eq:marginals} to the positive real solutions to obtain the marginals and compare them with marginals obtained by an implementation of BP without damping \cite{mooij2010libdai}; we further compare these results with the exact marginals obtained by the junction tree algorithm~\cite{lauritzen-junction-tree}.
To evaluate the correctness of approximated marginals we average the mean squared error (MSE) over all $N$ nodes. For binary RVs we can apply symmetry properties of the probability mass function $P(x_i) = 1 - P(\bar{x}_i)$, so that
\begin{align}
 \text{MSE} = \frac{2}{N} \sum_{i=1}^{N}|P(x_i)-\tilde{P}(x_i)|^2. 
 \label{eq:mse}
\end{align}

The mean magnetization $\magnetization = \frac{1}{N} \sum_{i=1}^{N} m_i$ describes the response of the system to the field $\field{}$~\cite{mezard2009}, where we parametrize binary RVs by the magnetization $m_i =  P(X_i=1) -P(X_i=-1)$.
Note that the difference between the mean magnetization and the approximate mean magnetization $\tilde{\magnetization}$ is the sum over all marginal errors
\begin{align}
 \magnetization - \tilde{\magnetization} = \frac{2}{N} \sum_{i=1}^{N} P(x_i) - \tilde{P}(x_i).
\end{align}
It follows that the mean belief $\langle P(\mathbf{X}) \rangle$ (i.e., the expectation of binary RVs with $\mathbb{S}=\{0,1\}$ averaged over all nodes) relates to the mean magnetization by
$ \langle P(\mathbf{X}) \rangle = \frac{1}{N} \sum_{i=1}^N P(x_i) = \frac{1}{2}(\magnetization+1)$.

Combining properly weighted marginals can lead to accuracy-improvements~\cite{braunstein2005survey, srinivasa2016survey}.
As the NPHC method provides all fixed point solutions we combine multiple solutions to evaluate how much
a weighted combination of -- either all or just some -- marginals increases the accuracy.

Therefore we consider the partition function and evaluate: the fixed point maximizing $\zbApproximate$, a combination of all fixed points weighted by $\zbApproximate$, and a weighted combination of fixed points that are local maxima of $\zbApproximate$.
Therefore we obtain the approximated marginals $\tilde{P}(\mathbf{X} = \mathbf{x})$ for every fixed point according to~\eqref{eq:marginals} and~\eqref{eq:pairwise};
the associated approximate Bethe partition function $\zbApproximateP$ is obtained by~\eqref{eq:betheFreeEnergy} and~\eqref{eq:bethePartition}.
Let $\setOfMarginals{ALL} = \{(\tilde{P}(\mathbf{X} = \mathbf{x}),\zbApproximate):(\underline{\smash{\mu}}, \underline{\alpha}) \in V_{\mathbb{R}_+}^{*}(\mathbf{F})\}$ be the set of all marginals and the associated Bethe partition function (obtained at fixed points). Accordingly, we define $\setOfMarginals{STABLE}$ for locally stable solutions only.
Then, if multiple fixed points exist, the weighted marginals are combined so that
\begin{align}
 \tilde{P}_{\text{ALL}}(\mathbf{X}=\mathbf{x})= \frac{1}{\sum\limits_{\zbApproximate \in \setOfMarginals{ALL}} \zbApproximate} \sum_{(\tilde{P}(\mathbf{X} = \mathbf{x}),\zbApproximate) \in \setOfMarginals{ALL}} \!\!\!\!\! \tilde{P}(\mathbf{X}= \mathbf{x}) \cdot \tilde{\zb}.
 \label{eq:combine}
\end{align}
Further we consider the combination of stable fixed points $\tilde{P}_{\text{STABLE}}(\mathbf{X}=\mathbf{x})$ and the marginals that maximize the Bethe partition function:
\begin{align} 
  \tilde{P}_{\text{MAX}}(\mathbf{X}=\mathbf{x}) = \argmax_{\tilde{P}} \zbApproximateP.
\end{align}

\subsection{Grid Graph with Random Factors (Spin Glass)}  \label{subsec:gridGraphRandom}
Consider a grid graph of size $N = 3 \times 3$ (Fig. \ref{fig:3x3}) with randomly distributed parameters. All pairwise and local potentials are sampled uniformly; i.e., $(\coupling{i}{j},\field{i}) \sim \mathcal{U}(-K,K)$. The larger the support of the uniform distribution is, 
the more difficult the task of inference becomes.
For $K=3$ inference is sufficiently difficult (cf. \cite{sutton2012improved}). 

According to~\eqref{eq:NrEq} the system of equations consists of $M = 72$ equations and $72$ unknowns. 
More specifically, \eqref{eq:SetOfEq} consists of 24 linear (i.e., normalization constraints), 40 quadratic, and 8 cubic equations; the total degree bounds the number of solutions by $d_t = 1^{24} \cdot 2^{40} \cdot 3^8 = 7.2 \cdot10^{15}$. 
Tracking of such an amount of solution paths is not feasible in practice, even with a parallel implementation of the NPHC method. The system of equations in~\eqref{eq:SetOfEq}, however, is sparse. We can exploit this sparsity that is induced by the graph structure if we consider the BKK bound and reduce the computational complexity. 
The number of complex solutions for this graph is bounded by $\text{BKK}=608$. After creating a suitable start system the problem is straightforward to solve with the NPHC method. It actually turns out that the BKK bound is tight for all graphs considered.

In particular we evaluated 100 grid graphs with random factors:
on 99 graphs BP converged after at most $10^4$ iterations.
Although the grid graph has multiple loops and the constrained $\fb$ is not necessarily convex~\cite[Corr.2]{heskes2004uniqueness}, we observe that for all $100$ graphs NPHC obtains a unique positive real solution corresponding to a unique BP fixed point.

\subsection{Grid Graph with Uniform Factors} \label{subsec:gridGraphUniform}
We further analyze the convergence properties of BP on grid graphs of size $N = 3 \times 3$ (Fig.~\ref{fig:3x3}) 
with constant potentials among all nodes and edges; i.e., for all edges $\coupling{i}{j} = J$ and for all nodes $\theta_i = \theta$. We apply BP and NPHC for 1681 graphs in the parameter region 
$(J,\theta) \in \{-2,-1.9,\ldots,1.9,2\}$. 

\begin{table*}[t]
\renewcommand{\arraystretch}{1.3}
\caption{\textsc{MSE of Marginals and Combined Marginals Obtained by BP and NPHC for the Grid Graph with Uniform Factors.}}
\label{tab:3x3}
\centering
\begin{tabular}{l l l l l l l }
   \toprule
  \multicolumn{2}{c}{Parameters}&  \multicolumn{2}{c}{Fixed Points} & \multicolumn{3}{c}{Combined} \\ 
  \cmidrule(lr){1-2} \cmidrule(lr){3-4}  \cmidrule(lr){5-7}
  Couplings & Local Field & BP & NPHC & MAX & ALL & STABLE \\ \midrule
  $ J \in [-2,2]$ & $\theta \in [-2,2]$ & 0.197 & 0.016 & 0.037 & 0.005 & 0.004 \\  
  $ J \in (I)$ & $\theta \in (I)$ & 0.010 & 0.010 & 0.076 & $9.2\cdot10^{-4}$ & $1.0\cdot10^{-9}$\\ 
   $ J \in (II)$ & $\theta \in (II)$  & 0.836 & 0.050 & 0.126& 0.003 & $1.5\cdot10^{-6}$\\
  $ J \in (III)$ & $\theta \in (III)$ & 0.006 & 0.006 &  0.006 & 0.006 & 0.006\\ 
 \bottomrule
 \end{tabular}
\end{table*}

\begin{figure*}[t]
\centering
\subfloat[][]{ \label{fig:3x3-pos}
\centering
  \begin{tikzpicture} 
  \begin{axis}[width = 0.33\linewidth,   
               colormap/hot, point meta min=0, point meta max=3.5,
               view={0}{90}, 
               grid = major]
    \addplot3[surf, shader=flat corner, draw = gray]
    table{PosSolutions_3x3};
  \end{axis}
  \node at (2.22,-0.6) {$\theta$};
  \node at (-0.5,1.81) {$J$};
  \node at (2.25,3.1) {$(I)$};
  \node at (2.25,1.8) {$(III)$};
  \node at (2.25,0.4) {$(II)$};
  
  \end{tikzpicture}}
 \hspace{4cm}
\subfloat[][]{ \label{fig:3x3-real}
\centering
  \begin{tikzpicture}
  \begin{axis}[view/h=70,
               width = 0.33\linewidth,
	       colormap/hot]
    \addplot3[surf]
    table{RealSolutions_3x3};
  \end{axis}
   \node at (0,0.25) {$\theta$};
   \node at (3.2,-0.4) {$J$};
  \end{tikzpicture}}
  \caption[]{ Number of fixed points on the grid graph of size $N=3 \times 3$: \subref{fig:3x3-pos}
  number of positive real fixed points (yellow: unique fixed point, red: three fixed points); \subref{fig:3x3-real} number of real fixed points. The increase in number of both real solutions and positive real solutions, indicates a phase transition.}
   \label{fig:3x3-solutions}
\end{figure*}
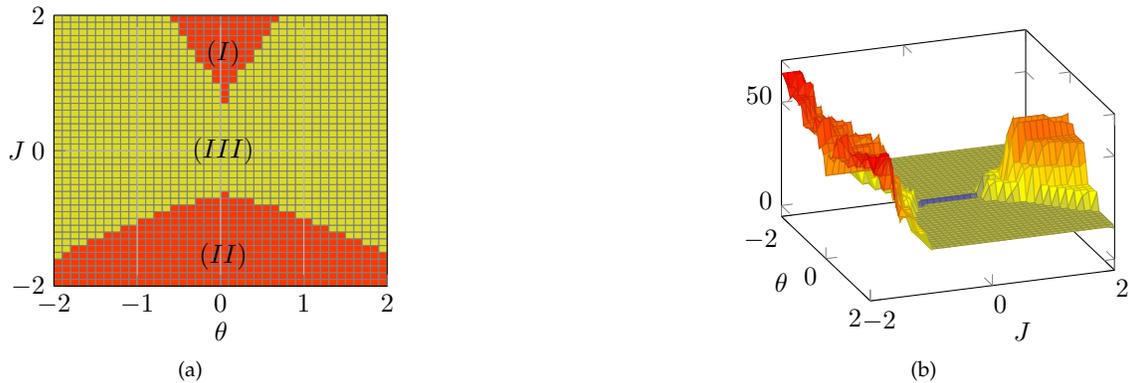

The number of solutions in $V_{\mathbb{R}_+}^{*}(\mathbf{F})$  is presented in Fig.~\ref{fig:3x3-pos}. In the well-behaved region $(III)$ of the parameter space a unique fixed point exists, whereas 3 fixed points exist in $(I)$ and $(II)$ -- this is in accordance with statistical mechanics\footnote{Note that the graph under consideration is of finite size and thus $|\neighbors{i}|$ varies among the nodes. As a consequence the partitioning according to \eqref{eq:phase1} - \eqref{eq:phase3}, are only approximations.} \cite[p.43]{mezard2009}.
Interestingly, we observe a close relation between the onset of phase transitions\footnote{We use the term phase transition for the finite-size manifestation of the phase transition in the thermodynamical sense.} and the increase in the number of real solutions in Fig.~\ref{fig:3x3-real}.
Most of these solutions, however, correspond to negative message values 
which violate Lemma~\ref{lm:msg} and are not feasible.

BP converges to some fixed point on all $1681$ graphs within at most $10^4$ iterations.
This raises the question: what is the approximation error of BP if it converges to the best possible fixed point?
Or, speaking in terms of free energies, how large is the gap between the global minimum of the constrained $\fb$ and the minimum of the Gibbs free energy?
To answer this question we evaluate the correctness of the approximated marginals by computing the MSE between the exact and the approximated marginals according to~\eqref{eq:mse}. The results are presented in Table~\ref{tab:3x3}.

Averaged over all graphs we can see that BP does not necessarily converge to the best solution. 
For NPHC we present the MSE for the fixed point with the lowest MSE; this highlights the existence of fixed points, which give \emph{more} accurate approximation than BP. 
Looking at all parameter regions separately we can see that BP does converge to the global optimum in $(III)$, as well as in $(I)$. 
In the antiferromagnetic region $(II)$ BP converges to a fixed point that does not necessarily give the best possible approximation.

If we consider regions with multiple fixed point solutions (i.e., $(I)$ and $(II)$) it becomes obvious that the fixed point maximizing $\zbApproximate$ is not necessarily the best one; 
this is especially surprising as BP obtains the best possible fixed point solution inside $(I)$. 
It turns out that for $\field{}=0$ identical initialized messages $\msg[0]{i}{j} = \mu^0$ correspond to a fixed point that, although potentially being unstable, equals the exact solution where $P(\bar{x}_i) = P(x_i)$ for all $X_i$ (cf. convergence results in Fig.~\ref{fig:convergence2x2}).

Inspired by these observations, 
one should not only consider the fixed point maximizing $\zbApproximate$, i.e., $\tilde{P}_{\text{MAX}}(\mathbf{X}=\mathbf{x})$, but rather obtain multiple fixed points by NPHC and combine them.
Indeed, especially inside region $(I)$ a combination of \emph{all} fixed point solutions according to~\eqref{eq:combine}, i.e., $\tilde{P}_{\text{ALL}}(\mathbf{X}=\mathbf{x})$ increases the accuracy of the approximation. If we combine the stable solution only, i.e., $\tilde{P}_{\text{STABLE}}(\mathbf{X}=\mathbf{x})$, the accuracy increases even more and gives the \emph{most} accurate approximation over the entire parameter space.

\subsection{Fully Connected Graph with Uniform Factors} \label{subsec:2x2}
We consider a fully connected Ising model with $|\mathbf{X}| = 2\times 2$ binary RVs (Fig.~\ref{fig:2x2}).
The system of equations~\eqref{eq:SetOfEq} consists of $36$ equations in $36$ unknowns and has its number of solutions bounded by the total degree $d_t=1^{12}\cdot2^{24}=16.8\cdot10^6$. Similar to Sec.~\ref{subsec:gridGraphRandom} 
a much tighter bound of $120$ is provided by the BKK bound.
Among all four nodes we apply uniform factors $\coupling{i}{j} = J$  and $\field{i} = \field{}$. This type of graph is particularly interesting because one can derive  exact conditions where phase transitions occur (cf. Section~\ref{subsec:freeEnergy}).


\begin{table*}[!t]

\renewcommand{\arraystretch}{1.3}
\caption{\textsc{MSE of Marginals and Combined Marginals Obtained by BP and NPHC for the Fully Connected Graph With Uniform Factors.}}
\label{tab:2x2}
\centering
 \begin{tabular}{l l l l l l l}
   \toprule
  \multicolumn{2}{c}{Parameters}&  \multicolumn{2}{c}{Fixed Points} & \multicolumn{3}{c}{Combined} \\ 
  \cmidrule(lr){1-2} \cmidrule(lr){3-4}  \cmidrule(lr){5-7}
  Couplings & Local Field & BP & NPHC & MAX & ALL & STABLE \\ \midrule
  $ J \in [-2,2]$ & $\theta \in [-2,2]$ & 0.069 & 0.007 & 0.011 &0.003 & 0.0027\\ 
  $ J \in (I)$ & $\theta \in (I)$       & 0.034 & 0.033 & 0.070 &0.002 & $2.0\cdot10^{-8}$ \\ 
  $ J \in (II)$ & $\theta \in (II)$     & 0.304 & 0.004 & 0.004 &0.004 &0.004\\ 
   $ J \in (III)$ & $\theta \in (III)$  & 0.003 & 0.003 & 0.003 &0.003 &0.003 \\ \bottomrule
 \end{tabular}
\end{table*}
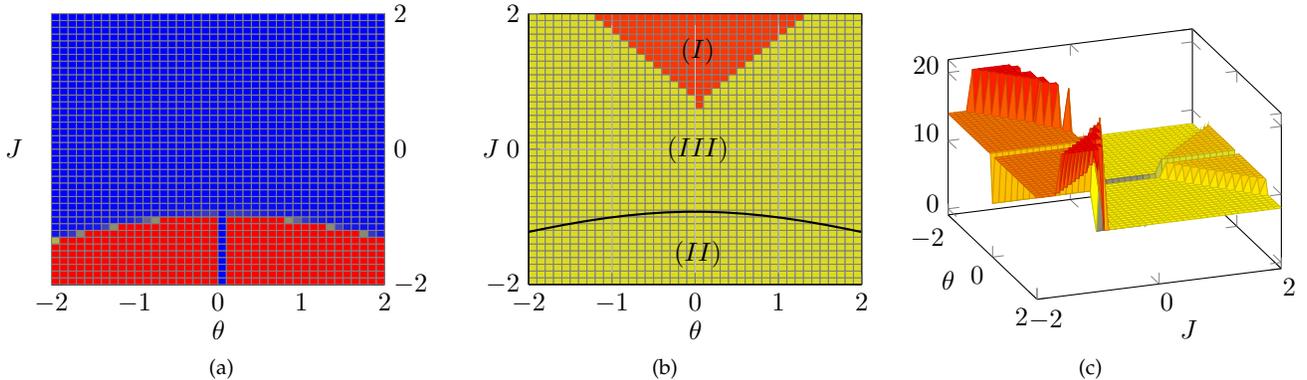
\begin{figure*}[!t]
\centering
\subfloat[][]{ \label{fig:convergence2x2}
  \centering
  \begin{tikzpicture}
  \begin{axis}
   [width = 0.33\linewidth,view={-270}{-90},
               colormap/hot, point meta min=0, point meta max=10000,
               grid = major]
    \addplot3[surf, shader=flat corner, draw = gray,xlabel style={rotate=-90}]
    table{iterations2x2};
  \end{axis};
  \node at (2.22,-0.6) {$\theta$};
  \node at (-0.5,1.81) {$J$};
  \end{tikzpicture}}
\subfloat[][] { \label{fig:solPos2x2}
\centering
  \begin{tikzpicture}[scale = 1]
  \begin{axis}[width = 0.33\linewidth,
               colormap/hot, point meta min=0, point meta max=3.5,
               view={0}{90}, 
               grid = major]
    \addplot3[surf, shader=flat corner, draw = gray]
    table{PosSolutions_2x2};
  \end{axis}
  \node at (2.22,-0.6) {$\theta$};
  \node at (-0.5,1.81) {$J$};
  \node at (2.25,3.1) {$(I)$};
  \node at (2.25,1.8) {$(III)$};
  \node at (2.25,0.4) {$(II)$};
  
  \node (L)  at (0,0.7) {};
  \node (R)  at (4.42,0.7) {}; 
  \path [black, thick](L.center) edge  [bend left=12] (R.center);
  \end{tikzpicture}}
\subfloat[][] { \label{fig:solReal2x2}
\centering
  \begin{tikzpicture}[scale = 1]
  \begin{axis}[width = 0.33\linewidth,
	       view/h=70,
	       colormap/hot]
    \addplot3[surf]
    table{RealSolutions_2x2};
  \end{axis}
   \node at (0,0.25) {$\theta$};
   \node at (3.2,-0.4) {$J$};
  \end{tikzpicture}}
\caption[]{Fully connected graph with $N=2\times2$. \subref{fig:convergence2x2} Convergence of BP: for the blue region BP did convergence -- for the red region it did not converge after $4\cdot 10^5$ iterations. \subref{fig:solPos2x2} Number of fixed points (yellow: unique fixed point, red: three fixed points. \subref{fig:solReal2x2} Number of real solutions -- note the sudden increase at the onset of phase transitions.}
\label{fig:2x2-solutions}
\end{figure*}

For $(J,\theta) \in (III)$ BP has a unique fixed point, which is a stable attractor in the whole message space \cite{mooij2005properties}. In $(I)$ three fixed points satisfy \eqref{eq:SetOfEq}, one of which is unstable and a local minimum of $\zbApproximate$. Both other fixed points are local maxima of $\zbApproximate$ and BP converges to one of them.
In the antiferromagnetic case BP does only converge inside $(III)$ and not inside $(II)$ as shown in Fig.~\ref{fig:convergence2x2}. We can see two interesting effects:
first, in Fig.~\ref{fig:solReal2x2} the number of real solutions increases at the onset of phase transitions; 
secondly, even though the convergence of BP breaks down at the phase transition (Fig.~\ref{fig:2x2-solutions}) a unique fixed  point exists inside $(II)$ (Fig.~\ref{fig:solPos2x2}) that gives an accurate approximation (cf. Table~\ref{tab:2x2}).

Similar as in Sec.~\ref{subsec:gridGraphUniform} we asses the MSE of the marginals obtained by NPHC and BP; averaged over all graphs, and for each distinct region  in Table~\ref{tab:2x2}. 
Indeed, inside region $(II)$ the marginals obtained by NPHC give a much better approximation than BP does.

Furthermore, note that the fixed point maximizing $\zbApproximate$ does not always give the best approximation. 
For $\field{}=0$ BP is initialized at the unstable fixed point and obtains the exact marginals (cf. Sec.~\ref{subsec:gridGraphUniform}). 
If multiple fixed points exist, a weighted combination of all marginals according to~\eqref{eq:combine} increases the accuracy -- only considering stable solutions, i.e., $\tilde{P}_{\text{STABLE}}(\mathbf{X}=\mathbf{x})$, gives the \emph{most} accurate approximations. 


\subsection{Fixed Point Evolution}\label{sec:evolution}
To get further insights we investigate the dependence of the accuracy of the fixed points on the parameters $(J,\field{})$ and compare the fixed point solutions obtained by NPHC to the exact solution.
Therefore, we fix the values of $\field{} \in \{0,0.1,0.5\}$ and vary $J\in[-2,2]$.
We illustrate the mean magnetization (cf. Sec.~\ref{subsec:evaluationCriteria}) of the exact solution and the approximate mean magnetization of all fixed points obtained by NPHC for both the grid graph and the fully connected graph in Fig.~\ref{fig:3x3-slice}-\ref{fig:2x2-slice}.

The exact solution (red)
is obtained by the junction tree algorithm~\cite{lauritzen-junction-tree}; solutions to~\eqref{eq:SetOfEq} are obtained by NPHC and are depicted by blue dots (stable) and by green dots (unstable). We further illustrate the fixed point that maximizes $\zbApproximate$, i.e., $\tilde{P}_{\text{MAX}}(\mathbf{X}=\mathbf{x})$, in black.

A fixed point is locally stable if a neighborhood exists such that messages inside this neighborhood converge to the fixed point~\cite[pp.170]{teschl2003} and locally unstable otherwise.
To analyze local stability one has to obtain all fixed points $(\underline{\smash{\mu}}, \underline{\alpha}) \in V_{\mathbb{R}_+}^{*}(\mathbf{F})$ with the NPHC method first.
Then BP is linearized in every fixed point by taking the partial derivatives of every messages with respect to all other messages, i.e., by analyzing the Jacobian matrix.
Finally, the eigenvalues of the Jacobian matrix determine the stability of the fixed point.
In practice one often uses damping, i.e., to replace  messages with a weighted average of older messages: this may change the stability of the fixed points, but does not change the solutions $V_{\mathbb{R}_+}^{*}$ of~\eqref{eq:SetOfEq} (cf. Sec.~\ref{subsec:beliefPropagation}).
We discuss some implications of the local stability analysis here, but focus mainly on the accuracy of the fixed points. For a detailed stability analysis of BP on Ising models we refer the reader to~\cite{mooij2005properties,knoll2017stability}.

\begin{figure*}[!t] 
\centering
\subfloat[][]{ \label{fig:3x3-1} 
         \begin{tikzpicture}
	\begin{axis}[width=0.7\columnwidth, xmin=-2,xmax=2,ymin=-1,ymax=1, ytick={-1,0,...,1},
	xlabel = $J$,
	ylabel = $\magnetization\text{ , }\tilde{\magnetization}$,
	mark size = 0.7pt,%
	minor xtick={-2,-1.5,...,2}, minor ytick={-1,-0.75,...,1},
	grid=both,
	scatter/classes={%
		e={mark=*,red},
		a={mark=*,cyan},
		u={mark=*,mygreen}}]
	\addplot[scatter,only marks, mark size=0.9,%
		scatter src=explicit symbolic]%
	table[meta=label] {3x3-th0};
	\addplot[black, thick] table {3x3-th0-bp-stable1};
	\addplot[black, thick] table {3x3-th0-bp-stable2};
	\addplot[black, thick] table {3x3-th0-bp-stable3};
	\addplot[black, thick] table {3x3-th0-bp-stable4};
	\addplot[black, thick] table {3x3-th0-bp-stable5};
	\addplot[mygreen] table {3x3-th0-bp-instable1}; 
	\addplot[mygreen,smooth] table {3x3-th0-bp-instable2};
	\addplot[red,thick] table {3x3-th0-exact};
	\addplot +[black, densely dashed, mark=none] coordinates {(-0.66,-1) (-0.66, 1)};
	\addplot +[black, densely dashed, mark=none] coordinates {(0.66, -1) (0.66, 1)}; 
	\end{axis}
	\node at (0.6,0.65) {$(II)$};
	\node at (2.3,0.65) {$(III)$};
	\node at (4,0.65) {$(I)$};
\end{tikzpicture}} 
\subfloat[][]{ \label{fig:3x3-2}         \begin{tikzpicture}
	\begin{axis}[width=0.7\columnwidth, xmin=-2,xmax=2,ymin=-1,ymax=1,ytick={-1,0,...,1},
	xlabel = $J$,
	mark size = 0.7pt,%
	minor xtick={-2,-1.5,...,2}, minor ytick={-1,-0.75,...,1},
	grid=both,
	scatter/classes={%
		e={mark=*,red},
		f={mark=*,cyan},
		u={mark=*,mygreen}}]
	\addplot[scatter,only marks, mark size=0.9,
		scatter src=explicit symbolic]%
	table[meta=label] {3x3-th0p1};
	\addplot[black, thick] table {3x3-th0p1-bp-stable1};
	\addplot[cyan] table {3x3-th0p1-bp-stable2};
	\addplot[cyan] table {3x3-th0p1-bp-stable3};
	\addplot[mygreen] table {3x3-th0p1-bp-instable1};
	\addplot[mygreen] table {3x3-th0p1-bp-instable2};
	\addplot[red,thick] table {3x3-th0p1-exact};
	\addplot +[black,mark=none] coordinates {(0.96, -1) (0.96, 1)};
	\addplot +[black,mark=none] coordinates {(-0.72, -1) (-0.72, 1)};
	\end{axis}
	\node at (0.6,0.65) {$(II)$};
	\node at (2.3,0.65) {$(III)$};
	\node at (4,0.65) {$(I)$};

\end{tikzpicture}}
\subfloat[][]{ \label{fig:3x3-3}         \begin{tikzpicture}
	\begin{axis}[width=0.7\columnwidth, xmin=-2,xmax=2,ymin=-1,ymax=1,ytick={-1,0,...,1},
	xlabel = $J$,
	mark size = 0.7pt,%
	minor xtick={-2,-1.5,...,2}, minor ytick={-1,-0.75,...,1},
	grid=both,
	scatter/classes={%
		e={mark=*,red},
		f={mark=*,cyan},
		u={mark=*,mygreen}}]
	\addplot[scatter,only marks, mark size=0.9,
		scatter src=explicit symbolic]%
	table[meta=label] {3x3-th0p5};
	\addplot[black, thick] table {3x3-th0p5-bp-stable1};
	\addplot[cyan] table {3x3-th0p5-bp-stable2};
	\addplot[cyan] table {3x3-th0p5-bp-stable3};
	\addplot[mygreen] table {3x3-th0p5-instable1};
	\addplot[mygreen] table {3x3-th0p5-instable2};
	\addplot[red,thick] table {3x3-th0p5-exact};
	\addplot +[black,mark=none] coordinates {(1.68, -1) (1.68, 1)};
	\addplot +[black,mark=none] coordinates {(-0.81, -1) (-0.81, 1)};
	\end{axis}
	\node at (0.6,0.65) {$(II)$};
	\node at (2.3,0.65) {$(III)$};
	\node at (4.45,0.65) {$(I)$};
\end{tikzpicture}}  
\caption[]{Results for the grid graph of size $N=3\times3$; mean magnetization $\magnetization$ and $\tilde{\magnetization}$ for $J \in [-2,2]$  and for: \subref{fig:3x3-1} $\field{}=0$,  \subref{fig:3x3-2} $\field{}=0.1$, and \subref{fig:3x3-3} $\field{}=0.5$. The exact solution is illustrated in red. All fixed points obtained by NPHC are depicted by blue dots (stable) and by green dots (unstable). The fixed point maximizing the partition function is illustrated in black.}
\label{fig:3x3-slice}
\end{figure*}

\begin{figure*}[!t] 
\centering
\subfloat[][]{ \label{fig:2x2-1} 
         \begin{tikzpicture}
	\begin{axis}[width=0.7\columnwidth, xmin=-2,xmax=2,ymin=-1,ymax=1, ytick={-1,0,...,1},
	xlabel = $J$,
	ylabel = $\magnetization\text{ , }\tilde{\magnetization}$,
	mark size = 0.7pt,%
	minor xtick={-2,-1.5,...,2}, minor ytick={-1,-0.75,...,1},
	grid=both,
	scatter/classes={%
		e={mark=*,red},
		f={mark=*,cyan},
		d={mark=*,mygreen},
		u={mark=*,mygreen}}]
	\addplot[scatter,only marks, mark size=0.9,
		scatter src=explicit symbolic]%
	table[meta=label] {2x2th0};
	\addplot[black, thick] table {2x2-th0-bp-stable1};
	\addplot[black, thick, smooth] table {2x2-th0-bp-stable2};
	\addplot[black, thick, smooth] table {2x2-th0-bp-stable3};
	\addplot[mygreen] table {2x2-th0-bp-instable};
	\addplot[red,thick] table {2x2-th0-exact};
	\addplot +[black,mark=none] coordinates {(0.55, -1) (0.55, 1)};
	\addplot +[black ,mark=none] coordinates {(-0.55, -1) (-0.55, 1)};
	\end{axis}
	\node at (0.6,0.65) {$(II)$};
	\node at (2.3,0.65) {$(III)$};
	\node at (4,0.65) {$(I)$};
\end{tikzpicture}} 
\subfloat[][]{ \label{fig:2x2-2}         \begin{tikzpicture}
	\begin{axis}[width=0.7\columnwidth, xmin=-2,xmax=2,ymin=-1,ymax=1,ytick={-1,0,...,1},
	xlabel = $J$,
	mark size = 0.7pt,%
	minor xtick={-2,-1.5,...,2}, minor ytick={-1,-0.75,...,1},
	grid=both,
	scatter/classes={%
		e={mark=*,red},
		f={mark=*,cyan},
		d={mark=*,mygreen},
		u={mark=*,mygreen}}]
	\addplot[scatter,only marks, mark size=0.9,
		scatter src=explicit symbolic]%
	table[meta=label] {2x2-th0p1};
	\addplot[black,thick] table {2x2-th0p1-bp-stable1};
	\addplot[cyan,smooth] table {2x2-th0p1-bp-stable2};
	\addplot[mygreen] table {2x2-th0p1-bp-instable};
	\addplot[red,thick] table {2x2-th0p1-exact};
	\addplot +[black,mark=none] coordinates {(0.76, -1) (0.76, 1)};
	\addplot +[black ,mark=none] coordinates {(-1.02, -1) (-1.02, 1)};
	\end{axis}
	\node at (0.6,0.65) {$(II)$};
	\node at (2.3,0.65) {$(III)$};
	\node at (4,0.65) {$(I)$};
\end{tikzpicture}}
\subfloat[][]{ \label{fig:2x2-3}         \begin{tikzpicture}
	\begin{axis}[width=0.7\columnwidth, xmin=-2,xmax=2,ymin=-1,ymax=1,ytick={-1,0,...,1},
	xlabel = $J$,
	mark size = 0.7pt,%
	minor xtick={-2,-1.5,...,2}, minor ytick={-1,-0.75,...,1},
	grid=both,
	scatter/classes={%
		e={mark=*,red},
		f={mark=*,cyan},
		d={mark=*,mygreen},
		u={mark=*,mygreen}}]
	\addplot[scatter,only marks, mark size=0.9,
		scatter src=explicit symbolic]%
	table[meta=label] {2x2-th0p5};
	\addplot[black,thick] table {2x2-th0p5-bp-stable1};
	\addplot[cyan] table {2x2-th0p5-bp-stable2};
	\addplot[mygreen] table {2x2-th0p5-bp-instable};
	\addplot[red,thick] table {2x2-th0p5-exact};
	\addplot +[black,mark=none] coordinates {(1.18, -1) (1.18, 1)};
	\addplot +[black ,mark=none] coordinates {(-1.06, -1) (-1.06, 1)};
	\end{axis}
	\node at (0.6,0.65) {$(II)$};
	\node at (2.3,0.65) {$(III)$};
	\node at (4,0.65) {$(I)$};
\end{tikzpicture}} 
\caption[]{Results for the fully connected graph of size $N=2\times2$; mean magnetization $\magnetization$ and $\tilde{\magnetization}$ for $J \in [-2,2]$ and for: \subref{fig:2x2-1} $\field{}=0$,  \subref{fig:2x2-2} $\field{}=0.1$, and \subref{fig:2x2-3} $\field{}=0.5$. The exact solution is illustrated in red. All fixed points obtained by NPHC are depicted by blue dots (stable) and by green dots (unstable). The fixed point maximizing the partition function is illustrated in black.}
\label{fig:2x2-slice}
\end{figure*}

The worst case scenario in terms of accuracy is shown in Fig.~\ref{fig:3x3-1} for the grid graph with $N=9$ binary RVs. For $\tilde{\magnetization} = 0$ a fixed point exists that equals the exact solution.
This fixed point, however, 
is only stable for $(J,\field{}) \in (III)$. As $|J|$ increases to the onset of a phase transition
this fixed point becomes unstable and
two additional solutions emerge. These additional solutions are symmetric, stable, and guarantee the convergence of BP on this graph (Fig.~\ref{fig:3x3-1}).

For $\field{} \neq 0$ 
a unique stable fixed point exists inside $(III)$. If we gradually increase $J$ until $(J,\field{}) \in (I)$ two additional fixed points emerge, one of which is unstable (Fig.~\ref{fig:3x3-2}).
Note that the fixed point maximizing $\zbApproximate$ (black) remains stable for all values of $J \in [-2,2]$.
An increase in $\field{}$ (Fig.~\ref{fig:3x3-3}) does enlarge the region where a unique fixed point exists (Fig.~\ref{fig:3x3-pos}) and further increases the accuracy of the fixed point maximizing $\zbApproximate$.
For $(J,\field{})\in(II)$ similar behavior is observed: i.e., for small values of $\field{}$ the unstable fixed point has the highest accuracy, but as $\field{}$ increases, the accuracy of the fixed point maximizing $\zbApproximate$ increases as well.


For the fully connected graph with $N = 4$ binary RVs (Fig.~\ref{fig:2x2}) results are shown in Fig.~\ref{fig:2x2-slice}.
For $\field{}=0$ and large values of $J$ the fixed point with $\tilde{\magnetization}=0$ is unstable and is accompanied by two symmetric, stable fixed points (Fig.~\ref{fig:2x2-1}). 
In contrast to the grid graph a unique fixed point exists for $(J,\field{}) \in (II)$; this fixed point, however, is unstable (Fig.~\ref{fig:solPos2x2}).
Therefore, despite the existence of an accurate fixed point, BP does not converge (Fig.~\ref{fig:convergence2x2}). 

For $\field{}\neq 0$ the non-convergent region $(II)$ is reduced, but the problem of a unique unstable fixed point persists. 
In contrast to the grid graph, the fully connected graph allows for frustrations with purely antiferromagnetic interactions. This points at a close connection between the existence of frustrations and the existence of a unique unstable solution.
For $(J,\field{}) \in (I)$ the fully connected graph behaves similar to the grid graph, i.e, the accuracy of the fixed point maximizing $\zbApproximate$ increases as $\field{}$ increases (Fig.~\ref{fig:2x2-2}-\ref{fig:2x2-3}).


Our main findings are:
First, increasing the field $\theta$ does lead to better convergence properties (cf.~\cite{knoll2017stability}) and increases the accuracy of the fixed point maximizing $\zbApproximate$.
Secondly, for $\field{} \neq 0$ the fixed point maximizing $\zbApproximate$ is unique and varies continuously under a change of $J$.
Finally, the stable fixed points are close to being symmetric, i.e., $\tilde{P}_{\text{stable,1}}(\mathbf{X}=\mathbf{x}) \cong \tilde{P}_{\text{stable,2}}(\mathbf{X}=\mathbf{\bar{x}})$. Consequently combining -- either all or only stable -- fixed points 
will not lead to good approximations unless a proper weighting by $\zbApproximate$ is applied. Applying a proper weighting, however, leads to accurate approximations (cf. Table~\ref{tab:3x3}-~\ref{tab:2x2}).
\subsection{Runtime Analysis}\label{subsec:runtime}
The time for solving~\eqref{eq:SetOfEq} is presented in Table~\ref{tab:timing} for grid graphs with random factors (Sec.~\ref{subsec:gridGraphRandom}), grid graphs with uniform factors (Sec.~\ref{subsec:gridGraphUniform}), and fully connected graph with uniform factors (Sec.~\ref{subsec:2x2}). Comparing the overall computation time of the NPHC method to BP it becomes obvious 
that NPHC
is no alternative in terms of computational efficiency. It is however the only method which is guaranteed to obtain \emph{all} fixed point solutions -- we were not able to apply the Gr\"obner basis method beyond a single-cycle graph with $N=4$. 
For the experiments we utilized a cluster-system with 160 CPUs: if we compare the overall computation time to the actual computation time utilizing the parallel implementation it becomes obvious that NPHC benefits tremendously from the high degree of parallelization. 
The runtime of BP depends mainly on the number of iterations and less on the size of the graph. Consequently, the stability of fixed points directly effects the performance of BP (cf. non-convergent region in Fig.~\ref{fig:convergence2x2}).  The NPHC method is much less sensitive to the stability of fixed point solutions; the mixed volume computation, which has the largest influence on the overall runtime, rather depends on the number of variables in~\eqref{eq:SetOfEq}. Note the mixed volume computation does not depend on the parameters. If one is interested in the fixed points for different parameter-sets on the same graph it would suffice to perform the mixed volume computation and start system creation only once; we did not do this to allow for a fair comparison.

\begin{table}[t!]
\renewcommand{\arraystretch}{1}
\setlength{\tabcolsep}{4pt}
\caption{\textsc{Runtime Comparison between BP and NPHC (in Seconds).}}
  \label{tab:timing}
  \centering
    \begin{tabular}{@{}l l l l l l l@{}}
      \toprule
      & \multicolumn{2}{c}{Grid Graph:}&  \multicolumn{2}{c}{Grid Graph:} &\multicolumn{2}{c}{Fully Conn. Graph:}\\
      & \multicolumn{2}{c}{Random $(J,\field{})$}&  \multicolumn{2}{c}{Uniform $(J,\field{})$} &\multicolumn{2}{c}{Uniform $(J,\field{})$} \\
	\cmidrule(lr){2-3}                  \cmidrule(lr){4-5}                \cmidrule(lr){6-7}
			& total         & parallel   & total        & parallel     & total & parallel                \\ \midrule
	Mixed Vol.    & 1364.5        & 11.2 	&      1311.1  & 12.0         & 0.27  & -- \\ 
	Path Track.   & 69.0          & 0.97 	&    70.5      & 1.0          & 3.77  & --\\ 
	Post Proc. & 29.9  	   & 1.40 	&    43.6      & 4.1          & 2.3   & -- \\ \midrule
	NPHC            & \multicolumn{2}{l}{13.57}  	   &   \multicolumn{2}{l}{17.1}  & \multicolumn{2}{l}{6.34}\\
	BP              & \multicolumn{2}{l}{$3.6\cdot10^{-3}$} &  \multicolumn{2}{l}{$0.7\cdot10^{-3}$}  & \multicolumn{2}{l}{0.03}\\
	\bottomrule
    \end{tabular}
\end{table}
\setlength{\tabcolsep}{6pt}

%
%
%

\subsection{Error-Correcting Codes}\label{subsec:ldpc}
One of the most prominent application areas where BP has a rich history of successful applications on cyclic graphs is iterative decoding.
We keep this section as self-contained as the scope of this paper allows. For a thorough introduction we refer the interested reader to the textbooks~\cite{mackay2003information, wymeersch2007iterative}; the connection between BP and decoding is further explained in great detail in~\cite{kschischang1998iterative,aji2000generalized,kschischang2001factor}.

We consider a binary symmetric channel (BSC) with a binary input $X_i \in \{0,1\}$ and a binary output $Y_i \in \{0,1\}$. The channel is specified by the error-probability $\epsilon$, where transmitted bits are flipped with probability $\epsilon$. That is $P(X_i=x_i|Y_i=y_i) = P(X_i=\bar{x_i}|Y_i=\bar{y_i}) = 1 - \epsilon$  and  $P(X_i=x_i|Y_i=\bar{y_i}) = P(X_i=\bar{x_i}|Y_i=y_i) =\epsilon$ (cf. Fig.~\ref{fig:bsc}). 
Additional, redundant bits help to detect and correct transmission errors.
The aim of error-correcting codes is to reach the desired error-correcting performance while introducing as little redundancy as necessary, i.e., to operate as close as possible to the theoretical limit.
Suppose we transmit a codeword with block length $N=7$ consisting of 4 source bits $X_1,\ldots, X_4$ and three parity-check bits $X_5,X_6,X_7$ that satisfy 
\begin{align}
 X_1\oplus X_2\oplus X_3 \oplus X_5 = 0, \\
 X_2\oplus X_3\oplus X_4 \oplus X_6 = 0,\\
 X_1\oplus X_3\oplus X_4 \oplus X_7 = 0,
\end{align}
where $\oplus$ is an XOR, i.e., the sum in modulo-2 arithmetic. This linear irregular code is the (7,4) Hamming code~\cite[Ch.1]{mackay2003information}.

In this example we assume that the sent message is $\mathbf{X} = (0,0,0,0,0,0,0)$\footnote{Note that the properties of the BSC are independent of $\mathbf{X}$.} and that exactly one bit suffers from a bit flip.
For irregular codes the degree of the variables $\neighborsVariable{Y}{i}$  varies; therefore, we consider two scenarios: either $\mathbf{Y} = (1,0,0,0,0,0,0)$ or $\mathbf{Y} = (0,0,0,0,0,1,0)$\footnote{Normally the performance of a code is studied over an ensemble of sent codewords where each bit flips with probability $\epsilon$.}.

\begin{figure}[t]
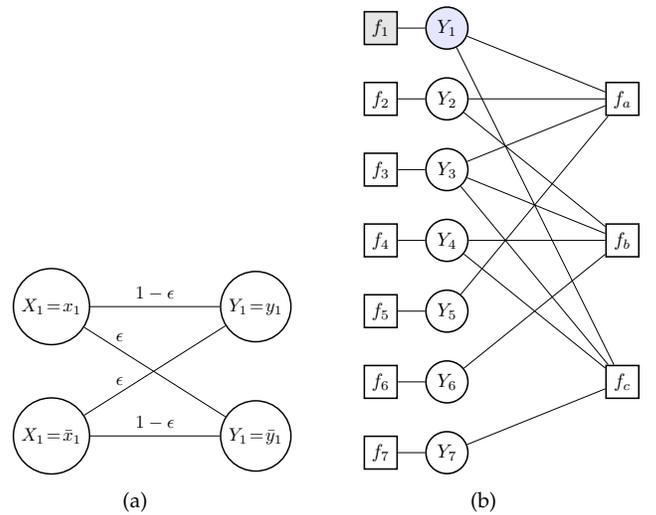

 \centering
\subfloat[][]{
\resizebox{0.5\columnwidth}{!}{
\input{BN-TIKZ.tex}

\begin{tikzpicture}[>=latex,text height=1.5ex,text depth=0.25ex]
  \matrix[row sep=0.5cm,column sep=0.5cm] 
  {     \node (Y_1) [hidden] {$X_1\!=\!x_1$}; & & & & &
        \node (Y_2) [hidden] {$Y_1\!=\!y_1$};\\ \\
	\node (Y_3) [hidden] {$X_1\!=\!\bar{x}_1$}; & & & & &
        \node (Y_4) [hidden] {$Y_1\!=\!\bar{y}_1$};\\
    };
    
    \path[-, anchor = south]        
        (Y_1) edge  node {$1-\epsilon$} (Y_2)
	(Y_3) edge  node {$1-\epsilon$} (Y_4)
	(Y_1) edge  node[pos=0.25] {$\epsilon$} (Y_4)
	(Y_3) edge  node [pos=0.25]{$\epsilon$} (Y_2)
	
       ;
\end{tikzpicture}

\label{fig:bsc}}
\subfloat[][]{
\resizebox{0.5\columnwidth}{!}{
\input{BN-TIKZ.tex}

\begin{tikzpicture}[>=latex,text height=1.5ex,text depth=0.25ex]
  \matrix[row sep=0.5cm,column sep=0.5cm] 
  {     \node (f_1) [factor] {$f_1$}; &
        \node (Y_1) [observed] {$\attribute{1}$};\\
        \node (f_2) [fo] {$f_2$}; &
        \node (Y_2) [hidden] {\attribute{2}};& & & & &
        \node (f_a) [fo] {$f_a$}; \\ 
        \node (f_3) [fo] {$f_3$}; &
	\node (Y_3) [hidden] {\attribute{3}}; \\
	\node (f_4) [fo] {$f_4$}; &
        \node (Y_4) [hidden] {\attribute{4}};& & & & &
        \node (f_b) [fo] {$f_b$}; \\ 
        \node (f_5) [fo] {$f_5$}; &
	\node (Y_5) [hidden] {\attribute{5}}; \\  
	\node (f_6) [fo] {$f_6$}; &
	\node (Y_6) [hidden] {\attribute{6}};& & & & &
	\node (f_c) [fo] {$f_c$};\\ 
	\node (f_7) [fo] {$f_7$}; &
	\node (Y_7) [hidden] {\attribute{7}};\\   
    };
    
    \path[-]        
        (Y_1) edge (f_a)
	(Y_1) edge (f_c)
	(Y_2) edge (f_a)
	(Y_2) edge (f_b)
	
	(Y_3) edge (f_a)
	(Y_3) edge (f_c)
	(Y_3) edge (f_b)
	(Y_4) edge (f_b)
	(Y_4) edge (f_c)
	
	(Y_5) edge (f_a)
	(Y_6) edge (f_b)
	(Y_7) edge (f_c)
	
	(f_1) edge (Y_1)
	(f_2) edge (Y_2)
	(f_3) edge (Y_3)
	(f_4) edge (Y_4)
	(f_5) edge (Y_5)
	(f_6) edge (Y_6)
	(f_7) edge (Y_7)

        ;
\end{tikzpicture}

\label{fig:fgSource}}
\caption{(a) Binary symmetric channel with error probability $\epsilon$; (b) Factor graph for the (7,4) Hamming code that corresponds to~\eqref{eq:ldpc} where $Y_1$ is flipped.}
\label{fig:bsc-fg}
\end{figure}

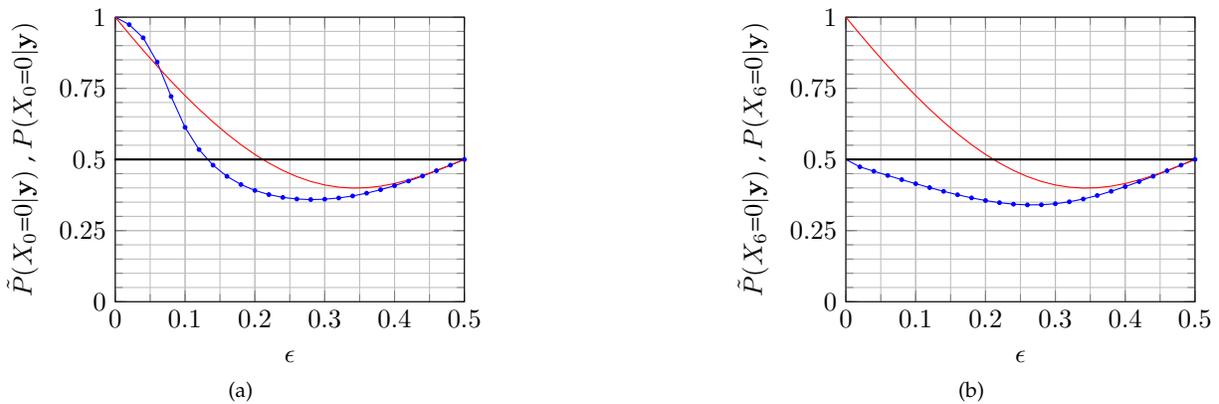
\begin{figure*}[t!]
 \centering
\subfloat[][]{ \label{fig:ldpc-source} 
         \begin{tikzpicture}
	\begin{axis}[width=0.7\columnwidth, xmin=0,xmax=0.5,ymin=0,ymax=1, ytick={0,0.25,...,1},xtick={0,0.1,...,1},
	mark size = 0.7pt,	
	xlabel = $\epsilon$,%
	ylabel = $\tilde{P}(X_0\text{=}0|\mathbf{y})\text{ , }P(X_0\text{=}0|\mathbf{y})$,
	minor xtick={0,0.05,...,1}, minor ytick={0,0.05,...,1},
	grid=both,
	scatter/classes={%
		e={mark=*,red},
		a={mark=*,blue}}]
	\addplot[scatter,only marks,%
		scatter src=explicit symbolic]%
	table[meta=label] {sourceBit};
	\addplot[blue] table {sourceBit-approx};
	\addplot[red] table {sourceBit-exact};
	\addplot[mark=none, black, thick] {0.5};
	\end{axis} \label{fig:ldpcResults-source}
\end{tikzpicture}} 
\hspace*{3cm}
\subfloat[][]{ \label{fig:ldpc-check}       
        \begin{tikzpicture}
	\begin{axis}[width=0.7\columnwidth, xmin=0,xmax=0.5,ymin=0,ymax=1, ytick={0,0.25,...,1},xtick={0,0.1,...,1},
	mark size = 0.7pt,
	xlabel = $\epsilon$,%
	ylabel = $\tilde{P}(X_6\text{=}0|\mathbf{y})\text{ , }P(X_6\text{=}0|\mathbf{y})$,
	minor xtick={0,0.05,...,1}, minor ytick={0,0.05,...,1},
	grid=both,
	scatter/classes={%
		e={mark=*,red},
		a={mark=*,blue}}]
	\addplot[scatter,only marks,%
		scatter src=explicit symbolic]%
	table[meta=label] {checkBit};
	\addplot[blue] table {checkBit-approx};
	\addplot[red] table {checkBit-exact};
	\addplot[mark=none, black, thick] {0.5};
	\end{axis} \label{fig:ldpcResults-check}
\end{tikzpicture}} 
\caption[]{Results for the (7,4) Hamming code. We compare the exact solution $P(X_i = 0|\mathbf{Y}=\mathbf{y})$ (red) to the approximate solution $\tilde{P}(X_i = 0|\mathbf{Y}=\mathbf{y})$ of NPHC (blue) as $\epsilon$ increases for: \subref{fig:ldpcResults-source} $Y_1$ is flipped and \subref{fig:ldpcResults-check} $Y_6$ is flipped.}
\label{fig:ldpc}
\end{figure*}

It is often convenient to express a code in factorized  form and represent it explicitly with a factor graph. A factor graph consists of variable nodes $Y_i$ and factor nodes $f_A$ where each factor $f_A$ acts as a function on all variables connected $\mathbf{Y_i}=\{Y_i \in \neighborsVariable{f}{A} \}$\footnote{Similar as in Sec.~\ref{subsec:beliefPropagation} we use $\neighborsVariable{\cdot}{}$ to specify the neighbors of nodes and variables.} . 
On a factor graph BP operates similar as introduced in Sec.~\ref{subsec:beliefPropagation}. Now two types of messages are sent along every edge: factor-to-variable messages $r_{f_A,Y_i}$ and  variable-to-factor messages $q_{Y_i,f_A}$. All messages are iteratively updated according to
\begin{align}
 r_{f_A,Y_i}^{n+1}(y_i) \!= \!\!\!\!\!\!\!\!\!\!\!\!\!\! \sum_{\mathbf{y_k} : Y_k \in \neighborsVariable{f}{A} \backslash \{Y_i \}} \!\!\!\!\!\!\!\!\!\!\!\!\!f_A(\mathbf{Y_k} = \mathbf{y_k}, Y_i = y_i) \!\!\!\!\!\!\!\!\!\prod_{Y_k \in \neighborsVariable{f}{A} \backslash \{Y_i\}} \!\!\!\!\!\!\!\!\!\!\! q_{Y_k,f_A}^{n}(y_k),
 \label{eq:factor-variable}
\end{align}
\begin{align}
 q_{Y_i,f_A}^{n+1}(y_i) = \alpha_{Y_i,f_A}^{n} \prod_{f_B \in \{\neighborsVariable{Y}{i} \backslash f_A\}} r_{f_B,Y_i}^{n}(y_i),
 \label{eq:variable-factor}
\end{align}
where $\alpha_{Y_i,f_A}$ is chosen such that $q_{Y_i,f_A}^{n+1}(y_i)+q_{Y_i,f_A}^{n+1}(\bar{y}_i) = 1$.
After all messages converged to a fixed point, the marginals of the variable nodes are approximated by the product of all incoming messages
\begin{align}
 P(Y_i=y_i) = \frac{1}{Z} \prod_{f_B \in \neighborsVariable{Y}{i}} r_{f_B,Y_i}^{n}(y_i).
\end{align}
Details of the factor graph representation can be found in~\cite[Ch.9]{mezard2009} and~\cite{loeliger2004introduction}.

Let us consider two types of factors: $f_i(Y_i) = P(X_i=x_i|Y_i=y_i)$ to model the BSC, and 
$f_A(\mathbf{Y_i})$ to verify if all parity-checks are satisfied. 
Then $f_A(\mathbf{Y_i}) = 1$ if the sum of all arguments $\sum_{\mathbf{Y_i}} y_i$ is even and $f_A(\mathbf{Y_i}) = 0 $ if the sum is odd.
The conditional probability for $\mathbf{X} = \mathbf{x}$ to be the codeword, given the received codeword $\mathbf{Y} = \mathbf{y}$ then is 
\begin{align}
\begin{split}
 P(\mathbf{X} =\mathbf{x}|&\mathbf{Y}= \mathbf{y}) = \frac{1}{Z} \prod_{i=1}^{7}f_i(Y_i) \cdot f_a(Y_1,Y_2,Y_3,Y_5) \cdot \\
  &f_b(Y_2,Y_3,Y_4,Y_6) \cdot f_c(Y_1,Y_3,Y_4,Y_7).
\end{split}
 \label{eq:ldpc}
\end{align}
The corresponding factor graph representation is shown in Fig~\ref{fig:fgSource}.

Now we 
create a system of equations similar as in Sec.~\ref{sec:eqSys} 
and obtain all fixed points with NPHC. 
A unique fixed point exists for all settings -- and this fixed point is stable, which justifies the application of BP on error-correcting codes.
We further estimate the accuracy of the approximation; this relates to the question: 
If we communicate over a BSC, how vulnerable is BP decoding to an increased error probability?
To answer this question we obtain the fixed points for $\epsilon \in [0,0.5]$ and compare
the exact solution obtained by the junction tree algorithm 
$P(X_i = 0|\mathbf{Y}=\mathbf{y})$ (red), to the approximate solution $\tilde{P}(X_i = 0|\mathbf{Y}=\mathbf{y})$ obtained by NPHC (blue) in 
Fig.~\ref{fig:ldpc}\footnote{Note for $\epsilon=0.5$ the transmission is random}.

An error can be corrected by BP decoding if $\tilde{P}(Y_i = 0|\mathbf{X}=\mathbf{x}) > 0.5$. With exact decoding a single bit-flip can be corrected for $\epsilon < 0.21$. 
According to the fixed points obtained by NPHC, however,
The fixed points obtained by NPHC however reveal that BP  
does not utilize the full potential of the code (Fig.~\ref{fig:ldpc}).
If $Y_1$ was corrupted the error can be corrected for $\epsilon < 0.13$ (Fig.~\ref{fig:ldpc-source}). BP fails to correct the error if $Y_6$ was flipped for all values of $\epsilon$ (Fig.~\ref{fig:ldpcResults-check}).
The reason therefore is a systematic error: the check-bit only has a single connection to the parity-check function $f_c$.
According to~\eqref{eq:variable-factor} $q_{Y_6,f_c}^{n+1}(y_i) = \alpha_{Y_7,f_7} \cdot r_{f_7,Y_7}(y_i)$; therefore $Y_6$ does not incorporate any information from the remaining graph.
To conclude, the higher the connectivity of a node, the more information of other bits is taken into account and the better the error-correction capability of BP and NPHC. 

\section{Conclusion}\label{sec:conclusion}

The NPHC method is presented as a tool to obtain all BP fixed point solutions. 
This work is an attempt to get a deeper understanding of BP, with potential implications for finding stronger conditions for uniqueness and convergence guarantees of BP fixed points. 

One key feature of our framework is to come up with a favorable upper bound on the number of solutions. In particular the BKK bound utilizes the sparsity of the polynomial system induced by the graph structure and is tight in all our experiments. 
Finally, to obtain all fixed points, we create an appropriate start system and track all solution paths in parallel.

On fully connected graphs and grid graphs with binary Ising factors we show how the number of fixed points evolves over a large parameter region. While, in practice, fixed points have to be positive, we empirically showed that there is a close relation between the occurrence of phase transitions and an increase in the number of real solutions. 

We empirically show an accuracy-gap between fixed points of BP and the best fixed points obtained by NPHC. In practice this justifies the exploration of multiple fixed points and selecting one that leads to the best approximation of the marginals. Moreover, we analyze the fixed point maximizing the partition function in detail: we show how it continuously deforms under varying parameters and how stability of this fixed point depends on the graph structure. The NPHC method further reveals for which parameters the fixed point maximizing the partition function does not correspond to the best fixed point obtained by NPHC. These observations motivate to consider a combination of weighted marginals. These weighted combinations provide strikingly accurate marginals whenever multiple fixed points exist.

When applied to graphs where BP does not converge, the NPHC method reveals that for some cases a unique fixed point does exist -- consequently, uniqueness of BP fixed points is by no means sufficient to guarantee convergence of BP. Further we conjecture a close connection between the existence of frustrations and the existence of a unique unstable fixed point.

One requirement of NPHC to be efficient is a tight upper bound on the number of solutions. The determination of this bound limits our current investigations to relatively small graphs. 
In future we aim to exploit the graph structure in order to reduce the complexity of identifying an upper bound on the number of solutions.

\ifCLASSOPTIONcompsoc
  \section*{Acknowledgments}
\else
  \section*{Acknowledgment}
\fi
Research supported in part by NSF under Grant DMS 11-15587. This work was supported by the Austrian Science Fund (FWF) under the project number P28070-N33.

\ifCLASSOPTIONcaptionsoff
  \newpage
\fi

 \bibliographystyle{IEEEtran}
 \bibliography{convergence_BP}

\end{document}